\let\OldStatex\Statex
\renewcommand{\Statex}[1][3]{%
  \setlength\@tempdima{\algorithmicindent}%
  \OldStatex\hskip\dimexpr#1\@tempdima\relax}
\newtheorem{prop}{Proposition}[section]
\newtheorem{corr}{Corollary}[section]
\newtheorem{lemma}{Lemma}[section]
\algnewcommand\algorithmicswitch{\textbf{switch}}
\algnewcommand\algorithmiccase{\textbf{case}}
\algnewcommand\algorithmicassert{\texttt{assert}}
\algnewcommand\Assert[1]{\State \algorithmicassert(#1)}%
\begin{document}

\title{A stochastic approach to handle knapsack problems in the creation of ensembles}

\author{Andr\'as~Hajdu$^*$,~\IEEEmembership{Senior Member,~IEEE,}
     Gy\"orgy~Terdik,
      Attila Tiba,
      and~Henrietta~Tom\'an~%
\IEEEcompsocitemizethanks{\IEEEcompsocthanksitem A. Hajdu$^*$, Gy. Terdik, A. Tiba, and H. Tom\'an are with the Faculty of Informatics, University of Debrecen, 4002 Debrecen, POB 400, Hungary.\protect\\
E-mail: $\{$hajdu.andras,~terdik.gyorgy,~tiba.attila,~toman.henrietta$\}$\protect\\@inf.unideb.hu.
}
\thanks{{*} Corresponding author.}
}

\IEEEtitleabstractindextext{
\begin{abstract}
Ensemble-based methods are highly popular approaches that increase the accuracy of a decision by aggregating the opinions of individual voters. The common point is to maximize accuracy; however, a natural limitation occurs if incremental costs are also assigned to the individual voters. Consequently, we investigate creating ensembles under an additional constraint on the total cost of the members. This task can be formulated as a knapsack problem, where the energy is the ensemble accuracy formed by some aggregation rules. However, the generally applied aggregation rules lead to a nonseparable energy function, which takes the common solution tools -- such as dynamic programming -- out of action. We introduce a novel stochastic approach that considers the energy as the joint probability function of the member accuracies. This type of knowledge can be efficiently incorporated in a stochastic search process as a stopping rule, since we have the information on the expected accuracy or, alternatively, the probability of finding more accurate ensembles. Experimental analyses of the created ensembles of pattern classifiers and object detectors confirm the efficiency of our approach. Moreover, we propose a novel stochastic search strategy that better fits the energy, compared with general approaches such as simulated annealing.   

\end{abstract}

\begin{IEEEkeywords}
Ensemble creation, majority voting, knapsack problems, stochastic selection.
\end{IEEEkeywords}} 

\maketitle

\IEEEpeerreviewmaketitle

\ifCLASSOPTIONcompsoc
\IEEEraisesectionheading{\section{Introduction}\label{sec:introduction}}
\else
\section{Introduction}
\label{sec:introduction}
\fi

\IEEEPARstart{E}{}nsemble-based
systems are rather popular in several application fields and are employed to increase the decision accuracy of individual approaches. We 
also encounter such approaches for pattern recognition purposes \cite%
{numbl}, using models based on, e.g., neural networks \cite{nn1,nn2}, decision
trees \cite{dt1} or other principles \cite{othmod1,othmod2,othmod3}. In the most recent results, we can recognize this approach in the design of state-of-the-art convolutional neural networks (such as GoogLeNet, incorporating the Inception module \cite{7298594}) or the direct combination of them \cite{HarangiBH18}.
In our practice, we also routinely consider ensemble-based approaches to aggregate the outputs of pattern classifiers \cite{ANTAL201420} or detector algorithms \cite{6177649}, usually by some majority voting-based rule. During these efforts, we have also faced perhaps the most negative property of creating ensembles, that is, the increasing demand on resources. This type of cost may occur as the execution/training time and the working hours needed to create the ensemble components, etc., according to the characteristics of the given problem. Thus, in addition to the primary aim of the composition of the most accurate ensemble, a natural constraint emerges as a cost limitation for that endeavor. 

More formally, let us consider a pool ${\cal D}=\{{\cal D}_1,\dots,{\cal D}_n\}$ containing possible ensemble members, where each ${\cal D}_i$ $(i=1,\dots,n)$ is characterized by a pair $(p_i,t_i)$ describing its individual accuracy $p_i \in [0,1]$ and cost $t_i\in\mathbb R_{> 0}$. The individual accuracies 
are supposed to be known, e.g., by determining them on some test data and by an appropriate performance metric.
In this work, we will focus on the majority voting-based aggregation principle, where the possible ensemble members ${\cal D}_i$ $(i=1,\dots,n)$ are classifiers (see \cite{kb}). 
In \cite{Hajdu2013jspatialvoting}, we have dealt with the classic case in which the individual classifiers make true/false (binary) decisions. In this model, a classifier $D_i$ with accuracy $p_i$ is considered as a Bernoulli distributed random variable $\eta_i$, that is, $P(\eta_i= 1) =p_i$,  $P(\eta_i = 0) = 1- p_i$ $(i= 1, \ldots, n)$, where $\eta_i= 1$ means the correct classification by $D_i$.
In this case, we obtain that the accuracy of an ensemble ${\cal D}'=\{{\cal D}_{i_1},\dots,{\cal D}_{i_\ell}\}\subseteq {\cal D}$ of $|{\cal D}'|=\ell$ members can be calculated as

\begin{equation}
q_\ell({\cal L})=\sum\limits_{k=\left\lfloor \frac{\ell}{2}\right\rfloor +1}^{\ell}\left( {%
\underset{|{\cal I}|=k}{\sum\limits_{{\cal I}\subseteq {\cal L}}}}\prod\limits_{i\in
{\cal I}}p_{i}\prod\limits_{j\in {\cal L}\setminus {\cal I}}\left( 1-p_{j}\right) \right) ,
\label{accuracy}
\end{equation}
where ${\cal L}=\{i_1,\dots,i_\ell\}\subseteq {\cal N}=\{ 1,\dots,n\}$ is the index set of ${\cal D}'$. As an important practical issue,
notice that (\ref{accuracy}) is valid only
for independent members to calculate the ensemble accuracy. The dependency of the members can be discovered further by,
e.g., using different kinds of diversity measures \cite{Hajdu2013cdivmes}.

Regarding ensemble-based systems, the standard task is to devise the most accurate
ensemble from ${\cal D}$ for the given energy function. In this paper, we add a natural constraint of a bounded total cost to this optimization problem. That is, we have to maximize (\ref{accuracy}) under the cost constraint
\begin{equation}
\sum\limits_{i\in {\cal L}}t_i\leq T,  \label{qTcondition}
\end{equation}%
where the total allowed cost $T\in\mathbb R_{> 0}$ is a predefined constant. 
Consequently, we must focus on those subsets ${\cal L}\subseteq{\cal N}$ with cardinalities $\left\vert {\cal L}\right\vert
=\ell\in\{1,\dots,n\}$ for which (\ref{qTcondition}) is fulfilled. Let ${\cal L}_{0}$ denote that index set of cardinality $\left\vert {\cal L}_{0}\right\vert =\ell_{0}$, where the global maximum ensemble accuracy is reached.
The following lemma states that
one can reach ${\cal L}_{0}$ calculating $q_{\ell}\left(
{\cal L}\right)$ for odd values of $\ell$ only, which results in more efficient computation, since not all the possible subsets should be checked.

\begin{lemma}
\label{OddLemma}If
\begin{equation}
\max\left(  q_{\ell}\left( {\cal L}\right)  ~|~ {\cal L}\subseteq {\cal N}\right)  =q_{\ell_{0}}\left(
{\cal L}_{0}\right)  ,
\end{equation}
then $\ell_{0}$ is odd.
\end{lemma}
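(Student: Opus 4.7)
The plan is to argue by contradiction: assume $\ell_0=2m$ is even, and construct a subset ${\cal L}'$ of odd cardinality $2m-1$ that still obeys the cost constraint (\ref{qTcondition}) and has accuracy at least $q_{\ell_0}({\cal L}_0)$. The natural construction is simply to delete one arbitrary member $j$ from ${\cal L}_0$ and set ${\cal L}'={\cal L}_0\setminus\{j\}$; the cost constraint is preserved for free, since removing a member can never increase $\sum_{i\in{\cal L}}t_i$, so ${\cal L}'$ remains a feasible competitor to ${\cal L}_0$.

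To compare accuracies, I would condition on the removed classifier. Writing $S=\sum_{i\in{\cal L}'}\eta_i$, $A=P(S\ge m)$, and $B=P(S\ge m+1)$, independence of the $\eta_i$ gives
\begin{equation*}
q_{2m}({\cal L}_0)=P(S+\eta_j\ge m+1)=p_jA+(1-p_j)B,
\end{equation*}
while the majority threshold $\lfloor(2m-1)/2\rfloor+1=m$ yields $q_{2m-1}({\cal L}')=A$. Subtracting,
\begin{equation*}
q_{2m-1}({\cal L}')-q_{2m}({\cal L}_0)=(1-p_j)(A-B)\ge 0,
\end{equation*}
because $A\ge B$ (the event defining $B$ is contained in that defining $A$) and $p_j\in[0,1]$. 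Hence ${\cal L}'$, of odd cardinality and satisfying the cost bound, is at least as accurate as ${\cal L}_0$.

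The substantive step is the conditioning identity above; everything else (feasibility of ${\cal L}'$, expansion of the Bernoulli sums, monotonicity $A\ge B$) is routine. The one delicate point is that the inequality obtained is non-strict: it guarantees the \emph{existence} of an odd-cardinality maximizer, which is exactly what is needed to justify restricting the search in (\ref{accuracy}) to odd $\ell$. A strict version, forbidding any even-cardinality maximizer outright, holds as soon as $p_j<1$ for some $j\in{\cal L}_0$ and $P(S=m)>0$ — mild regularity conditions that are essentially automatic whenever the individual accuracies lie strictly inside $(0,1)$.
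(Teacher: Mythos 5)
Your argument is correct, and it takes a genuinely different route from the paper's. The paper splits into two cases: for a proper even-cardinality subset $\mathcal{K}$ it shows $q_{2\ell}(\mathcal{K})<q_{2\ell+1}(\{\mathcal{K},a\})$ by \emph{inserting} an extra member $a\notin\mathcal{K}$ and regrouping the product terms $Q_{2\ell,k}$ combinatorially, and only for the full pool (even $n$) does it fall back on a \emph{deletion} argument showing $q_{2\ell}(\mathcal{N})<q_{2\ell-1}(\mathcal{L})$. You use deletion uniformly, and replace the term-by-term bookkeeping with the one-line conditioning identity $q_{2m}(\mathcal{L}_0)=p_jA+(1-p_j)B$, which is cleaner and makes the sign of the difference $(1-p_j)(A-B)$ transparent. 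Your version also buys something the paper's insertion step quietly glosses over: under the cost constraint (\ref{qTcondition}) the extended set $\{\mathcal{K},a\}$ need not be feasible, whereas $\mathcal{L}_0\setminus\{j\}$ always is, so the deletion-only argument is the one that actually respects the knapsack setting. The trade-off is strictness: the paper obtains strict inequalities (implicitly assuming $p_i<1$, since its ``extra additional terms'' are positive only then), while you get $\geq$ and must invoke $p_j<1$ and $P(S=m)>0$ to exclude an even-cardinality maximizer outright; you flag this correctly, and since the lemma is used only to justify evaluating odd cardinalities in the search, the existence of an odd maximizer is all that is needed.
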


\begin{proof}
See Appendix \ref{Proof_Odd_Lemma} for the proof.
\end{proof}

The optimization task defined by the energy function (\ref {accuracy}) and the constraint (\ref{qTcondition}) can be interpreted as a typical knapsack problem \cite{linknap1}. Such problems are known to be NP-hard; however, if the energy function is linear and/or separable for the $p_i$-s, then a very efficient algorithmic solution can be given based on dynamic programming. However, if the energy lacks these properties, the currently available theoretical foundation is rather poor. As some specific examples, we were able to locate investigations of an exponential-type energy function \cite{KLASTORIN1990233}, and a remarkably restricted family of nonlinear and nonseparable ones \cite{nonlinknap3}. In \cite{KLASTORIN1990233}, an approach based on calculus was made by representing the energy function by its Taylor series. Unfortunately, it has been revealed that dynamic programming can be applied efficiently only to at most the quadratic member of the series; thus, the remaining higher-order members had to be omitted. This compulsion suggests a large error term if this technique is attempted to be considered generally. Thus, to the best of our knowledge, there is a lack of theoretical instructions/general recommendations to solve knapsack problems in the case of complex energy functions. 
As our energy (\ref{accuracy}) is also nonlinear and nonseparable, we were highly motivated to develop a well-founded framework for efficient ensemble creation. 

As a common empirical approach to find the optimal ensemble, the usefulness  $p_i/t_i$ ($i=1,\dots,n$) of the possible members are calculated. Then, as deterministic greedy methods, the ensemble is composed of forward/backward selection strategies 
(see, e.g., \cite{SensorConfiguration}). Since the deterministic methods are less efficient -- e.g., the greedy one is proven to have 50$\%$ accuracy for the simplest knapsack energy $\sum_{i=1}^n p_i$ -- popular stochastic search algorithms are considered instead, such as simulated annealing (SA). However, for the sake of completeness, we will start our theoretical investigation regarding the accuracy of the existing deterministic methods when a cost limitation is also applied.

As our main contribution, in this paper, we propose a novel
stochastic technique to solve knapsack problems with complex energy functions. Though the model is worked out in detail for 
(\ref {accuracy}) settled on the majority voting rule, it can also be applied to other energy functions. Our approach is based on the stochastic properties of the energy 
$q_\ell$ 
in (\ref {accuracy}), providing that we have some preliminary knowledge on where the distribution its parameters $p_i$ $(i=1,\dots,n)$ is coming from, with a special focus on \emph{beta} distributions that fit practical problems very well. In other words, we estimate the distribution of $q_\ell$ in terms of its mean and variance. These information can be efficiently incorporated as a stopping rule in stochastic search algorithms, as we demonstrate it for SA. The main idea here is to be able to stop building ensembles when we can expect that better ones can be found by low probability. 

As a further contribution, we introduce a novel stochastic search strategy, where the usefulness of the components are defined in a slightly more complex way to better fit the investigated energy; the stopping rule can be successfully applied in this approach, as well. 
Our empirical analyses also show that including the stochastic estimation as a stopping rule saves a large amount of search time to build accurate ensembles. Moreover, our novel search strategy is proven to be very competitive with SA. Our stochastic approach was first proposed in our former work \cite{7899637} with limited empirical evaluations; however, only heuristic results were achieved there without being able to take advantage of the theoretical model completed here. 

The rest of the paper is organized as follows. In section \ref{sec:selstrat}, we analyze the maximum accuracy of the common deterministic ensemble creator strategies in the case of limited total cost. Existing stochastic approaches are described in section \ref{sec:selstrat2} with some preliminary simulation results. Moreover, we introduce a novel stochastic search algorithm that determines the expected usefulness of possible members in a way that adapts to the characteristics of the energy function better than, e.g., SA. The stochastic estimation of the ensemble energy from the individual accuracies of the components is presented in section \ref{sec:stoc}; our code is available at https://codeocean.com/capsule/3944336. Our experimental analysis are enclosed in section \ref{sec:empanal}, including the investigation of the possible creation of ensembles from participating methods of Kaggle challenges and binary classification problems in UCI databases; our data is available at https://ieee-dataport.org/documents/binary-classifiers-outputs-ensemble-creation. We also present how the proposed model is expected to be generalized to multiclass classification tasks with a demonstrative example on our original motivating object detection problem. Finally, in section \ref{sec:disc}, we discuss several issues regarding our approach that can be tuned towards special application fields.

\section{Deterministic selection strategies}
\label{sec:selstrat}

In this section, we address deterministic selection strategies to build an
ensemble that has maximal system accuracy $q_{\ell_0}({\cal L}_0)$, applying the cost limitation. 
However, since we have $2^{n}$ different subsets of elements of a pool of cardinality $n$, this selection task is known to be NP-hard. To overcome
this issue, several selection approaches have been proposed.
The common point of these strategies is
that in general, they do not assume any knowledge on the proper determination
of the classification performance $q_\ell({\cal L})$; rather, they require only the ability to evaluate
it. Moreover, to the best of our knowledge, strategies that consider the capability of individual feature accuracies to be modeled by
drawing them from a probability distribution, as in our approach, have not yet been
proposed. 

Based on the above discussion, it seems to be natural to ascertain how the widely
applied selection strategies work in our
setup. The main difference in our case, in contrast to the general
recommendations, is that now we can properly formulate the performance evaluation 
using the exact functional knowledge of $q_\ell$. That is, we can
characterize the behavior of the strategy with a strict analysis instead of the
empirical tests generally applied.

We start our investigation with greedy selection approaches by discussing
them via the forward selection strategy. 
Here, the most accurate item is selected and put in a subset $S$ first. Then, from the remaining $n-1$ items, the component that maximizes the classification accuracy
of the extended ensemble is moved to $S$
.
This procedure is then iteratively repeated; however, if the performance
cannot be increased by adding a new component, then $S$
is not extended and the selection stops. 
The first issue we address is to determine the largest possible error this strategy can lead to in our scenario.

\begin{prop}
\label{prop:grfo} The simple greedy forward selection strategy to build an ensemble that applies the majority voting-based rule has a maximum error
rate $1/2$.
\end{prop}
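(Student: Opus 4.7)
The plan is to establish the rate $1/2$ via a two-sided argument: first lower-bound the accuracy of the greedy output uniformly, then exhibit an instance where this bound is essentially attained. For the lower bound, I would exploit the very first step of the procedure. The item initially placed in $S$ is the affordable classifier of maximum accuracy, call it ${\cal D}_{i_1}$ with $p_{i_1}=p^{*}$; in every subsequent round the procedure either strictly increases the current ensemble accuracy or halts. Hence the final greedy accuracy $q_{G}$ satisfies $q_{G}\geq p^{*}$. In the natural ensemble regime one has $p_i>1/2$ for the informative members (otherwise majority voting is counterproductive, a point reflected in Lemma~\ref{OddLemma}), so $q_{G}>1/2$ and the error rate $1-q_{G}<1/2$.

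For the matching tightness, I would construct an instance on which $q_{G}\to 1/2$ while the optimum tends to $1$. Take an ``expensive'' member $(p_{1},t_{1})=(1/2+\epsilon_{1},T)$ together with $k$ identical ``cheap'' members $(p_{i},t_{i})=(1/2+\epsilon_{2},T/k)$, $i=2,\dots,k+1$, with $\epsilon_{1}>\epsilon_{2}>0$ and $k$ odd (consistent with Lemma~\ref{OddLemma}). Since $p_{1}>p_{i}$ for $i\neq 1$, greedy strictly prefers ${\cal D}_{1}$ in the first round, exhausts the budget, and halts with $q_{G}=1/2+\epsilon_{1}$. The subset $\{{\cal D}_{2},\dots,{\cal D}_{k+1}\}$ is feasible, and by \eqref{accuracy} applied to $k$ i.i.d.\ Bernoulli components with parameter $1/2+\epsilon_{2}$ its accuracy tends to $1$ as $k\to\infty$, by a Condorcet-jury-type argument. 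Letting $\epsilon_{1}\downarrow 0$ then drives both $1-q_{G}$ and $q_{\ell_{0}}({\cal L}_{0})-q_{G}$ to $1/2$, so the two interpretations of ``error rate'' (absolute or relative to the optimum) coincide under this construction.

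The main obstacle I foresee is ruling out the possibility that greedy recovers after its first choice; this is handled cleanly by setting $t_{1}=T$ exactly, so that no further item is feasible regardless of the tie-breaking rule used in later rounds. A secondary subtlety is that the lower-bound step requires the implicit assumption $p^{*}>1/2$, without which majority voting cannot beat the single best classifier anyway; under this assumption, combining the two halves yields the maximal error rate $1/2$ claimed by the proposition.
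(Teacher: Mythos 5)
Your proof is correct, but it reaches the worst case by a genuinely different mechanism than the paper. The paper's proof keeps the cost constraint entirely out of play (it assumes $\sum_i t_i\leq T$) and instead tunes the accuracies themselves: with $p_1=1/2+\varepsilon$ and $p_2=\dots=p_n=1/2+\alpha$, it solves $q_1=q_3$ for $\alpha=\bigl(1-\sqrt{1-4\varepsilon^2}\bigr)/(4\varepsilon)$, so that extending $\{D_1\}$ by two members yields no improvement and the greedy stalls at accuracy $1/2+\varepsilon$ while $q_n\to 1$. Your construction instead traps the greedy with the budget: the most accurate item has $t_1=T$, so after the first pick nothing is feasible. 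Both are valid exhibits of error rate tending to $1/2$, and your added upper-bound half ($q_G\geq p^*>1/2$, hence error $<1/2$) is a sensible complement that the paper leaves implicit. What the paper's version buys is the remark it makes immediately after the proposition --- that the $1/2$ error rate for forward selection holds \emph{independent of the time constraint}, i.e., the failure is intrinsic to accuracy-greedy selection rather than an artifact of the knapsack budget; your construction cannot support that stronger claim, since it relies essentially on $t_1=T$ blocking all further additions (and is, in fact, essentially the scenario the paper reserves for the backward-selection Proposition \ref{prop:grba} and the MC--SA comparison). As a proof of Proposition \ref{prop:grfo} as literally stated, however, your argument goes through, modulo the explicitly acknowledged assumption $p^*>1/2$ for the upper-bound half.
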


\begin{proof}
For the proof, see Appendix \ref{Proof_prop:grfo}.
\end{proof}

As seen from the proof, the error rate of $1/2$ holds for the forward strategy independent of the time constraint.
As a quantitative example, let $p_1=0.510$ and $p_2=p_3=p_4=p_5=0.505$.
With this setup, where ${\cal{I}}_k=\{1,\dots,k\}$, we have $q_1({\cal{I}}_1)=p_1=0.5100$, $q_3({\cal{I}}_3)=0.5100$, and $%
q_5({\cal{I}}_5)=0.5112$, which shows that the greedy forward selection strategy is stuck at the single element ensemble, though a more accurate larger one could be found.

In addition to forward selection, its inverse variant, the backward selection strategy, is also popular. It puts all the components into an ensemble first, and in every selection step, leaves the
worst one out to gain maximum ensemble accuracy. 
As a major difference from the forward strategy, backward selection is efficient in our case if the time constraint is irrelevant. Namely, either the removal of the worst items will lead to an increase in $q_{\ell}$ defined in (\ref {accuracy}), or the selection can be stopped without the risk of missing a more accurate ensemble.
 However, if the time constraint applies, the same maximum error rate 
can be proved.

\begin{prop}
\label{prop:grba} The simple greedy backward selection strategy considering
the individual accuracy values to build an ensemble that applies the
majority voting-based rule has a maximum error rate of 1/2.
\end{prop}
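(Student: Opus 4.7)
My plan is to follow the same two-step template as in the proof of Proposition~\ref{prop:grfo}: first note the trivial upper bound of $1/2$, then exhibit a family of instances on which the backward strategy attains a gap arbitrarily close to this bound while the cost constraint (\ref{qTcondition}) is active.

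For the upper bound, I would observe that whenever every ensemble member satisfies $p_i \ge 1/2$ (the useful regime for majority voting), the accuracy $q_\ell(\mathcal{L})$ in (\ref{accuracy}) lies in $[1/2,1]$, so no selection strategy can return an ensemble whose accuracy is more than $1/2$ below the true optimum.

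For tightness I would design a pool $\mathcal{D}$ that contains one ``excellent'' member $\mathcal{D}^{\star}$ with $p^{\star}=1-\epsilon$ and small cost $t^{\star}$, together with $n-1$ ``near-random'' members having accuracies $p_i = 1/2+\delta_n$ and with costs $t_i$ chosen so that the full pool just exceeds the budget $T$, while the singleton $\{\mathcal{D}^{\star}\}$ remains cost-feasible. Because the near-random members carry the lowest individual accuracies, the backward rule begins by deleting one of them and lands in a cost-feasible committee $\mathcal{L}_{\text{bw}}$ consisting of $\mathcal{D}^{\star}$ together with the surviving near-random members. Scaling $\delta_n = o(1/\sqrt{n})$ and invoking a central-limit-type computation shows that the majority-vote accuracy of $\mathcal{L}_{\text{bw}}$ tends to $1/2$ as $n\to\infty$; on the other hand $\{\mathcal{D}^{\star}\}$ is feasible with accuracy $1-\epsilon$, so the optimum tends to $1$ and the gap tends to $1/2$.

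The main technical obstacle is to certify that backward indeed halts at $\mathcal{L}_{\text{bw}}$ rather than continuing along a chain of $q_\ell$-improving deletions that would eventually reach $\{\mathcal{D}^{\star}\}$. This reduces to two monotonicity-type inequalities for $\mathcal{L}_{\text{bw}}$: deleting a further near-random member strictly decreases $q_\ell$ (because it leaves an even-sized weak committee and wipes out the small Condorcet gain contributed by $\delta_n$), and deleting $\mathcal{D}^{\star}$ is also strictly worse (because the remaining weak-only committee has accuracy close to $1/2$ from below). Establishing these two inequalities uniformly in the scaling of $(n,\delta_n,\epsilon)$---via careful bookkeeping of the binomial sums appearing in (\ref{accuracy}) together with a local central-limit estimate---is the delicate technical step; the remaining checks (cost feasibility and the $n\to\infty$ limit) are routine.
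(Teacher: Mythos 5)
Your construction has a genuine gap, and it is exactly the step you defer as the ``delicate technical step'': on your instance the backward strategy does \emph{not} halt at $\mathcal{L}_{\text{bw}}$, so the $1/2$ gap is never realized. Along your deletion chain every committee consists of $\mathcal{D}^{\star}$ (accuracy $p^{\star}=1-\epsilon$) together with $2m$ near-fair voters, and its majority-vote accuracy is
\begin{equation*}
A_m=\tfrac12+(2p^{\star}-1)\,c_m+O\!\left(\delta_n\sqrt{m}\right),
\qquad c_m=\binom{2m}{m}2^{-2m-1}\sim\frac{1}{2\sqrt{\pi m}},
\end{equation*}
which in its dominant term is \emph{strictly increasing} as $m$ decreases, from $\approx 1/2$ at $m\sim n$ up to $p^{\star}$ at $m=0$. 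Hence, in the regime where $\delta_n$ is small enough for the large committee to sit near $1/2$ (which is the regime you need), every further parity-preserving pair deletion of near-random members improves $q_\ell$, the backward rule keeps deleting, and it terminates at the optimum $\{\mathcal{D}^{\star}\}$ with error $0$, not $1/2$. The two inequalities you hope to establish cannot coexist: you cannot have the large committee near $1/2$, the feasible singleton near $1$, and every intermediate deletion accuracy-decreasing, since the accuracy interpolates monotonically along the chain between those two endpoints. (Under one-at-a-time deletion the even-sized intermediate committees do dip below $1/2$, but that only makes the stopping rule oscillate; it does not trap the strategy at $\mathcal{L}_{\text{bw}}$.)

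The paper's example inverts your casting of roles, and that inversion is the missing ingredient. There the individually \emph{best} member is the useless expensive one, $D_1=(1/2+\varepsilon,\,T)$, while the individually worst members $D_2=\dots=D_n=(1/2+\alpha,\,T/(n-1))$ with $0<\alpha<\varepsilon$ form a budget-feasible coalition whose accuracy tends to $1$ as $n\to\infty$. Backward selection keyed to individual accuracy deletes the cheap members first, and because $t_1=T$ already saturates the budget, it is forced by \emph{feasibility} -- not by any accuracy comparison -- to strip the ensemble all the way down to $\{D_1\}$, whose accuracy tends to $1/2$ as $\varepsilon\to0$. The budget constraint is what pins the strategy at the bad endpoint; your construction instead relies on a monotonicity/stopping argument that fails.
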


\begin{proof}
Proposition \ref{prop:grba}. is proved in Appendix \ref{Proof_prop:grba}.
\end{proof}

Propositions \ref{prop:grfo}. and \ref{prop:grba}. have shown the worst case
scenarios for the forward and backward selection strategies. However, the
greedy approach was applied only regarding the accuracy values of the
members, and their execution times were omitted. To consider both the
accuracies and execution times of the algorithms in the ensemble pool $%
{\cal D}
=\{D_{1}=(p_{1},t_{1}),D_{2}=(p_{2},t_{2}),\dots,D_{n}=(p_{n},t_{n})\}$, we consider their usefulness in the selection
strategies, defined as 
\begin{equation}
u_{i}=p_i/t_i,~~i=1,\dots ,n,  \label{eq:usef}
\end{equation}%
which is a generally used definition to show the price-to-value ratio of an object.
The composition of
ensembles based on similar usefulness measures has also been efficient, e.g., in sensor networks \cite{SensorConfiguration}.

After the introduction of the usefulness (\ref{eq:usef}), the first
natural question to clarify is to investigate whether the validity of the error rate of the
deterministic greedy forward and backward selection strategies operating
with the usefulness measure holds. Through the following two statements, we will see that the $1/2$ error rates remain valid
for both greedy selection approaches.

\begin{corr}
\label{prop:grfou} Proposition \ref{prop:grfo} remains valid when the
forward feature selection strategy operates on the usefulness.
Namely, as a worst case scenario, let $t_{1}=t_{2}=\dots =t_{n}=T/n$ be the
execution times in the example of the proof of Proposition \ref{prop:grfo}
while keeping the same $p_{1},p_{2},\dots ,p_{n}$ values. Then, the
selection strategy operates completely in the same way on the $%
u_{i}=p_{i}/t_{i}$ values $(i=1,\dots,n)$ as on the $p_{i}$ ones, since the $t_{i}$
values are equal. That is, the error rate is $1/2$ in the same way.
\end{corr}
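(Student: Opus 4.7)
The plan is a direct reduction to Proposition \ref{prop:grfo} via the observation that when all costs are equal, the usefulness measure is just a positive linear rescaling of accuracy. Concretely, I would take the worst-case instance constructed in the proof of Proposition \ref{prop:grfo} and augment it with the uniform cost profile $t_1 = t_2 = \dots = t_n = T/n$, then argue that the greedy forward strategy on $\{u_i\}$ traces the exact same sequence of partial ensembles as the strategy on $\{p_i\}$, and that the cost constraint (\ref{qTcondition}) is never binding along that trajectory. The error rate $1/2$ is then inherited verbatim from Proposition \ref{prop:grfo}.

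First I would note that $u_i = p_i/t_i = (n/T)\, p_i$ under the uniform cost choice, so the total order on items induced by $\{u_i\}$ coincides with that induced by $\{p_i\}$; in particular, at every step the "most useful remaining item" is the "most accurate remaining item", and any accuracy-based termination condition (i.e., the inability to improve $q_\ell$ by an addition) fires at the same iteration. Next I would check feasibility: after $k$ inclusions the accumulated cost is $kT/n \le T$, so the constraint is vacuously respected for every $k \le n$ and cannot force an early stop that would differ from the accuracy-only run. Finally, reusing the construction from the proof of Proposition \ref{prop:grfo} (e.g.\ $p_1 = 0.510$, $p_2 = \dots = p_n = 0.505$, extended as in that proof), the algorithm is stuck at the singleton ensemble $\{D_1\}$ while a strictly larger feasible ensemble achieves accuracy approaching $1$, giving an error rate tending to $1/2$.

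The argument has essentially no difficulty; the only minor bookkeeping is to confirm that the scaling factor $n/T$ is strictly positive so that the ordering — and hence any tie-breaking behavior — is preserved, and that the uniform-cost profile does keep the worst-case instance within the feasible region of (\ref{qTcondition}). Both are immediate, which is why the statement is phrased as a corollary rather than a full proposition: the corollary's content is precisely that adding usefulness information does not repair the $1/2$ worst case of greedy forward selection whenever an adversary can flatten the cost vector.
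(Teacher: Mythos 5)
Your proposal is correct and follows essentially the same route as the paper: the corollary's justification in the paper is exactly the observation that with $t_{1}=\dots=t_{n}=T/n$ the usefulness values are a positive rescaling of the accuracies, so the greedy forward run is identical to that in the proof of Proposition \ref{prop:grfo} and the $1/2$ error rate carries over. Your additional checks (positivity of the scaling factor $n/T$ and feasibility of the accumulated cost $kT/n\leq T$) are harmless elaborations of what the paper leaves implicit.
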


\begin{prop}
\label{prop:grbau} The simple greedy backward selection strategy considering
the individual usefulness (\ref{eq:usef}) to build an ensemble that applies the
majority voting-based rule has a maximum error rate of 1/2.
\end{prop}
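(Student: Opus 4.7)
The plan is to construct an explicit worst-case pool in which the cost constraint forces the usefulness-based backward strategy to discard exactly the informative members and retain only a barely-better-than-random item that happens to have disproportionately high usefulness. This is the essential adaptation of the accuracy-based argument of Proposition \ref{prop:grba} to the present setting: the ranking by $u_i = p_i/t_i$ can be made to contradict the ranking by $p_i$, so that the wrong removals occur once the cost constraint is triggered. The reduction trick used in Corollary \ref{prop:grfou} (equal execution times) is not quite sufficient here, because equal costs make the usefulness ordering coincide with the accuracy ordering, and backward selection is known to behave well on accuracy alone; so I would explicitly break this symmetry.

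Concretely, I would take $D_1$ with accuracy $p_1 = 1/2+\epsilon$ and arbitrarily small cost $t_1$, and $D_2,\dots,D_n$ with high accuracy $p_i = 1-\delta$ but large cost $t_i = M$. Then $u_1 = p_1/t_1$ is enormous, while $u_i = (1-\delta)/M$ is tiny for $i\ge 2$. Setting the budget $T = M$ makes the initial total cost $t_1 + (n-1)M$ exceed $T$, so the backward selection is forced to remove items. Because the strategy removes items in order of ascending usefulness, it discards $D_2,\dots,D_n$ one after another and terminates at $\{D_1\}$, whose accuracy is only $1/2+\epsilon$. In contrast, any singleton $\{D_i\}$ with $i\ge 2$ is feasible and has accuracy $1-\delta$. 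Letting $\epsilon,\delta \to 0$, the strategy's error rate $1-(1/2+\epsilon) = 1/2-\epsilon$ tends to $1/2$, matching the claimed bound.

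The main obstacle is verifying the intermediate dynamics of the backward sweep: one must check that the cost constraint remains violated after every removal until only $D_1$ is left, and that ties in usefulness among $D_2,\dots,D_n$ can be broken in any order without altering the outcome. Both conditions are secured by choosing $M$ sufficiently large relative to $t_1$ and by the symmetric role of $D_2,\dots,D_n$. Finally, sharpness of the bound $1/2$ (i.e. that the error rate cannot exceed it) follows from the standing assumption $p_i \ge 1/2$, since the strategy always terminates with at least one surviving member of accuracy $\ge 1/2$, so by the Condorcet-type monotonicity underlying (\ref{accuracy}) the output ensemble has accuracy $\ge 1/2$, i.e.\ error $\le 1/2$.
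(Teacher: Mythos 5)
Your proposal is correct and follows essentially the same approach as the paper: an explicit worst-case pool in which the usefulness ranking (\ref{eq:usef}) contradicts the accuracy ranking under a binding cost constraint, so that backward selection discards exactly the wrong members and terminates at an ensemble of accuracy tending to $1/2$ while a feasible ensemble of accuracy tending to $1$ exists. The paper's concrete example is the mirror image of yours --- it takes $D_1=(1,T)$ and $D_2=D_3=D_4=(1/2+\varepsilon,T/3)$, so the single perfect member is the \emph{least} useful and is removed first, leaving the near-random crowd --- and note that your closing sharpness remark leans on a standing assumption $p_i\geq 1/2$ that the paper never states (nor needs, since it only exhibits the worst case by example).
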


\begin{proof}
The proof is provided in Appendix \ref{proof_prop:grbau}.
\end{proof}

The main problem with the above deterministic procedures is that they leave no opportunity to find better performing ensembles. Thus, we move on now to the more dynamic
stochastic strategies.
Keep in mind that since in our model the distribution of $q$ will be estimated, in any of the selection strategies we can exploit this knowledge as a stopping rule. Namely, even for the deterministic approaches, we can check whether the accuracy of the extracted ensemble is already attractive or whether we should continue and search for a better one. 

\section{Stochastic search algorithms}
\label{sec:selstrat2}

 As the deterministic selection strategies may have poor performance, we investigate stochastic algorithms to address our optimization problem.
Such randomized algorithms, where randomization only affects the order of the internal executions, 
produce the same result on a given input, which can cause the same problem we have found for the deterministic ones.  
In case of Monte Carlo (MC) algorithms \cite{TEMPO2007189}, the result of the simulations might change, but they produce the correct result with a certain probability. The accuracy of the MC approach depends on the number of simulations $N$; the larger $N$ is, the more accurate the algorithm will be. 
It is important to know how many simulations are required to achieve the desired accuracy. The error of the estimate of the probability failure is found to be 
$u_{1-\alpha/2} \sqrt{(1-\textrm{P}_f)/N\textrm{P}_f}$,
where $u_{1-\alpha/2}$ is the $1-\alpha/2$ quantile
of the standard normal distribution, and $\textrm{P}_f$ is the
true value of the probability of failure.

Simulated annealing (SA) \cite{du2016search}, as a variant of the Metropolis algorithm,
is composed of two main stochastic processes: generation and acceptance of solutions. 
SA is a general-purpose, serial search algorithm, whose 
solutions are close to the global extremum for an energy function within a polynomial upper bound for the computational time and are independent of the initial conditions.

To compare the MC method with SA for solving a knapsack problem,
we applied simulations for that scenario in which the deterministic approaches failed to find the most accurate ensemble, that is, when $D_1=(1-\beta,T)$, and $D_2=D_3=\ldots=D_n=(1/2+\varepsilon, T/n)$ with $
0<\beta<1/2, 0<\varepsilon<1/2$.
For this setup, we obtained that the precision of the MC method was only $11\%$, while SA found the most accurate ensemble in $96\%$ of the simulations.

Now, we introduce a novel search strategy that takes better advantage of our stochastic approach than, e.g., SA. This strategy builds ensembles using a randomized search technique and introduces a concept of usefulness for member selection, which better adapts to the ensemble energy than the classic one (\ref{eq:usef}).
Namely, in our proposed approach, the selection of the items for the ensemble is
based on the efficiency of the members determined in the following way: for
the $i$-th item with accuracy $p_{i}$ and execution time $t_{i}$, the system
accuracy $q(p_{i},t_{i})$ of the ensemble containing the maximal number 
of $i$-th items 
\begin{equation}
q(p_{i},t_{i})=\sum\limits_{k=0}^{\left\lfloor T/t_i\right\rfloor}
\binom{\left\lfloor T/t_i\right\rfloor }{k}{p_{i}}^{k}(1-p_{i})^{\left\lfloor T/t_i
\right\rfloor -k}  \label{efficiency}
\end{equation}%
characterizes the efficiency (usefulness) of the $i$-th item, instead of 
(\ref{eq:usef}).

A greedy algorithm for an optimization problem always chooses the item that
seems to be the most useful at that moment. In our selection method, a discrete
random variable depending on the efficiency values of the remaining items is
applied in each step to determine the probability of choosing an item from
the remaining set to add to the ensemble. Namely, in the $k$-th selection
step, if the items $i_1, \ldots, i_{k-1}$ are already in the ensemble, then
the efficiency values $q^{(k-1)}(p_i, t_i)$ of the remaining items are
updated to the maximum time of $T_k =T-\sum_{j=1}^{k-1} t_{i_j}$, where  $q^{(0)}(p_i, t_i)=q(p_i, t_i)$ and $T_0=T$.

The $i$-th item
is selected as the next member of the ensemble with the following
probability: 
\begin{equation}  \label{prob}
(\textrm{P}_{ens})_i^{(k)}=\frac{q^{(k-1)}(p_i, t_i)}{\sum\limits_{j}^{}
q^{(k-1)}(p_j, t_j)},
\end{equation}
where $i, j  \in {\cal N}
\backslash \{i_1, \ldots, i_{k-1}\}$. This
discrete random variable reflects that the more efficient the item is, the
more probable it is to be selected for the ensemble in the next step.

If $t_i > T_k$
for all $i \in {\cal N} 
\backslash \{i_1, \ldots, i_{k-1}\}$, then our stochastic process ends for the given search step
since there is not enough remaining time for any items. Then, we restart the process to extract another ensemble in the next search step. As a formal description of our proposed stochastic search method, see Algorithm \ref{alg:sajat}; notice that we evaluate the accuracy of ensembles with odd cardinalities only as in Lemma \ref{OddLemma}. A very important issue regarding both our approach and SA is the exact definition of the number of search steps, that is, a meaningful STOP parameter -- and also an escaping MAXSTEP one -- for Algorithm \ref{alg:sajat}. In our preliminary work \cite{7899637}, we have already tested the efficiency of our approach; however, we tested it empirically with an \textit{ad hoc} stopping rule. Now, in the forthcoming sections, we present how the proper derivation of the stopping parameters (STOP and MAXSTEP) can be derived.

\begin{algorithm}%
\caption{Proposed Stochastic searcH for EnsembLe Creation (SHErLoCk).}
\label{alg:sajat}
\begin{algorithmic}[1]%
\REQUIRE \begin{tabular}[t]{l} 
 Pool $\mathcal{D}=\ \left\{
\left(p_{i},t_{i}\right),~ i=1,\ldots,n \right\}$,\\ 
 Total allowed time $T$,\\
 Stochastic stopping value STOP,\\
 Maximum search steps MAXSTEP.
\end{tabular}

\smallskip
\hrule
\smallskip
\ENSURE An ensemble $\mbox{MAXENS}\subseteq\mathcal{D}$ to maximize system accuracy (\ref{accuracy}) within time $T$ as in (\ref{qTcondition}).
\hrule
\smallskip
\STATE
$\mbox{STEP}\leftarrow 0$, $\mbox{MAXENS}\leftarrow \emptyset$, $q_{\ell_0}\leftarrow 0$
\WHILE {STEP$<$MAXSTEP}
\STATE 
$H\leftarrow\mathcal{D}$, $\mbox{ENS}\leftarrow\emptyset$, $T'\leftarrow T$, $\mbox{SP}\leftarrow\emptyset$
\WHILE {$\exists (p_j,t_j)\in H:t_j \leq T-~~\sum\limits_{\mathclap{(p_k,t_k)\in \mbox{ENS}}} ~~~t_{k}~$}
\STATE $\forall(p_i,t_i) \in H$ calculate $q(p_i, t_i)$ by (\ref{efficiency})
\STATE $\forall(p_i,t_i) \in H$ calculate $ \textrm{P}_{ens_i}$ by (\ref{prob}) and 
\STATEx $SP\leftarrow\ SP\cup \{\textrm{P}_{ens_i}\} $
\STATE
Select a $(p_j,t_j)$ randomly from $H$ by \STATEx distribution $SP$
\IF {$t_j<T'$}
\STATE
 $ENS\leftarrow ENS \cup \{(p_j,t_j)\}$
 \STATE
  $H\leftarrow H\setminus\{(p_j,t_j)\}$
\STATE
  $T'\leftarrow T'-t_j$
\IF{$mod(size(\mbox{ENS}),2)=1$}
\STATE
\textrm{Calculate} $q_{\ell}(\textrm{ENS})$ \textrm{by (\ref{accuracy})}
\ENDIF
\IF {$q_{\ell_0} < q_{\ell}$}
\STATE
$q_{\ell_0} \leftarrow q_{\ell},~\textrm{MAXENS}\leftarrow \textrm{ENS}$
\ENDIF

\IF {$q_{\ell_0}   > STOP$}
\STATE \textbf{return} \textrm{MAXENS}
\ENDIF
\ENDIF
\ENDWHILE
\STATE
$\mbox{STEP} \leftarrow \mbox{STEP}+1$
\ENDWHILE
\STATE \textbf{return} {\let\textbf\relax\mbox{MAXENS}}

\end{algorithmic}
\end{algorithm}

\section{Stochastic estimation of ensemble energy}
\label{sec:stoc}

We need to examine and characterize the behavior of $q_{\ell}$ in (\ref
{accuracy}) to exploit these results to find and apply the proper stopping criteria in stochastic search methods.

Let $p\in\left[  0,1\right]$ be a random variable with mean $\mu_p$ and variance $\sigma^{2}_p$, where $p_{i}$ $(i=1,2,\ldots,n)$ are independent and identically distributed according to $p$, i.e., a sample.
Furthermore, let $\mu_{q_{\ell}}$ and $\sigma^2_{q_{\ell}}$ denote the mean and variance of the ensemble accuracy $q_{\ell}$, respectively.
In this case, it is seen that $\mu_p\leq1$ and a simple calculation shows that 
\begin{equation}
\mu_{q_{\ell}}=\sum_{k=\left\lfloor \frac{\ell}{2}\right\rfloor +1}^{\ell} \binom{\ell}{k}\mu_p
^{k}\left(  1-\mu_p\right)  ^{\ell-k}. \label{varhatoE}%
\end{equation}
The following lemma shows the basic properties of the mean and the variance of $q_{\ell}$.

\begin{lemma}
\label{Lemma_Eq_Varq}Let $p\in\left[  0,1\right]  $ be a random variable with
mean $\mu_p$ and variance $\sigma^{2}_p$. Consider the accuracy
(\ref{accuracy}), where $p_{i}$, $i=1,2,\ldots,n$ are i.i.d. random variables
distributed as $p$. Then,

\begin{enumerate}[wide, labelwidth=!, labelindent=0pt]
\item
\begin{equation}
\lim_{\ell\rightarrow\infty}\mu_{q_{\ell}}=\left\{
\begin{array}
[c]{ccc}%
0, & \text{if} & \mu_p\notin[1/2,1), \\
1/2, & \text{if} & \mu_p=1/2,\\
1, & \text{if} & \mu_p\in(1/2,1).
\end{array}
\right.   \label{limE}%
\end{equation}
Moreover, for odd $\ell$: if $\mu_p\in\left(  1/2,1\right)  $, then $\mu_{q_{\ell}}$
is increasing, and if $\mu_p\in\left(  0,1/2\right)  $, then $\mu_{q_{\ell}}$ is decreasing.

\item The variance of $q_{\ell}$ is expressed by
\begin{equation}
\begin{split}
& \sigma^2_{q_{\ell}}  =\sum_{k=k_{\ell}}^{\ell}\sum_{m=1}^{k}%
\sum_{\substack{h=  k_{\ell}-m}}^{\ell-k}
\delta\left(\ell,m,k\right)
\binom{\ell}{k}
\\
\times\binom{k}{m}%
&\binom{\ell-k}{h} 
s_{T}^{m} s_{TF}^{k-m+h}s_{F}^{\ell-k-h}-\left(   \mu_{q_{\ell}}\right)^{2},  \label{sz2}%
\end{split} 
 \end{equation} 
where $\delta\left(\ell,m,k\right)=\delta_{k_{\ell}-m\leq \ell-k}$, $s_{T}=\sigma^{2}_p+\mu^{2}_p$,  \\$s_{F}=\sigma^{2}_p+\left(  1-\mu_p\right)
^{2}$, $s_{TF}=\mu_p\left(  1-\mu_p\right)  -\sigma^{2}_p$, and $k_{\ell}=\left\lfloor \frac{\ell}{2}\right\rfloor +1$.

\item If $\mu_p\neq1/2$, $\mu_p\left(  1-\mu_p\right)  -\sigma^{2}_p>0,$ 
and
$s_{T}\neq1/2$, then
\begin{equation}
\lim_{\ell\rightarrow\infty}\sigma^2_{q_{\ell}} =0.
\label{Var_Limit}%
\end{equation}
If $s_{T}=1/2$, then the limit (\ref{Var_Limit}) is 1.
\end{enumerate}
\end{lemma}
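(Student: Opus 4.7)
The three parts share a common probabilistic viewpoint: interpret $q_\ell$ as the conditional probability $q_\ell=P(\sum_{i=1}^{\ell}\eta_i>\ell/2\mid p_1,\ldots,p_\ell)$, where given $p_i$ the vote $\eta_i$ is $\mathrm{Bernoulli}(p_i)$. Marginalizing out the random $p_i$'s makes the unconditional $\eta_i$ i.i.d.\ $\mathrm{Bernoulli}(\mu_p)$, so $\mu_{q_\ell}=P(\mathrm{Bin}(\ell,\mu_p)>\ell/2)$, which is exactly (\ref{varhatoE}). Part~1 is then a direct consequence of the weak law of large numbers applied to $\mathrm{Bin}(\ell,\mu_p)/\ell$: if $\mu_p>1/2$ the sample fraction concentrates above $1/2$ so $\mu_{q_\ell}\to 1$; symmetrically $\mu_p<1/2$ gives $\mu_{q_\ell}\to 0$; and if $\mu_p=1/2$ the binomial symmetry forces $\mu_{q_\ell}=1/2$ for every odd $\ell$. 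Monotonicity along odd $\ell$ is the classical Condorcet-jury computation: expanding $\mu_{q_{\ell+2}}-\mu_{q_{\ell}}$ by conditioning on the outcomes of the two new voters, and using $\binom{\ell}{k_\ell-1}=\binom{\ell}{k_\ell}$ for odd $\ell$, produces a positive multiple of $(2\mu_p-1)$.

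For Part~2, I would compute $E[q_\ell^2]$ by introducing two conditionally independent replicas $\eta_i^{(1)},\eta_i^{(2)}$ of the outcomes given $p_i$, so that $q_\ell^2=P(\sum_i\eta_i^{(1)}>\ell/2,\sum_i\eta_i^{(2)}>\ell/2\mid p_1,\ldots,p_\ell)$; integrating against the law of the $p_i$'s yields an unconditional probability with per-coordinate marginals $P(\eta_i^{(1)}=1,\eta_i^{(2)}=1)=s_T$, $P(\eta_i^{(1)}=0,\eta_i^{(2)}=0)=s_F$, and $P(\eta_i^{(1)}=1,\eta_i^{(2)}=0)=P(\eta_i^{(1)}=0,\eta_i^{(2)}=1)=s_{TF}$, the four values summing to $1$. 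Partitioning the $\ell$ coordinates into the four types $(1,1),(1,0),(0,1),(0,0)$ with respective counts $(m,k-m,h,\ell-k-h)$, the first majority requires $k\geq k_\ell$ and the second requires $m+h\geq k_\ell$; the multinomial weight $\ell!/(m!(k-m)!h!(\ell-k-h)!)$ factors as the three binomials in (\ref{sz2}), the indicator $\delta(\ell,m,k)$ enforces the nonemptiness of the $h$-range $[k_\ell-m,\ell-k]$, and the lower bound $m\geq 1$ drops out automatically since $k\geq k_\ell$ and $m+h\geq k_\ell$ together give $m\geq 2k_\ell-(k+h)\geq 2k_\ell-\ell\geq 1$. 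Subtracting $(\mu_{q_\ell})^2$ then yields (\ref{sz2}).

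For Part~3, the cleanest route bypasses (\ref{sz2}) entirely: since $q_\ell\in[0,1]$ one has $(\mu_{q_\ell})^2\leq E[q_\ell^2]\leq\mu_{q_\ell}$, hence $\sigma^2_{q_\ell}\leq\mu_{q_\ell}(1-\mu_{q_\ell})$, and Part~1 makes this envelope vanish whenever $\mu_p\neq 1/2$, giving (\ref{Var_Limit}). The auxiliary hypotheses $s_{TF}>0$ and $s_T\neq 1/2$ exclude the degenerate sampling laws (a law of $p$ concentrated on $\{0,1\}$, respectively an atomic law with $E[p^2]=1/2$) for which a finer term-by-term inspection of (\ref{sz2}) is required and, in the boundary case $s_T=1/2$, collects into the limiting value asserted at the end of the statement. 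I expect the combinatorial bookkeeping of Part~2---the precise ranges of $k,m,h$, the role of the indicator $\delta$, and the disappearance of the $m=0$ slice---to be the only real obstacle; Parts~1 and~3 then reduce to short appeals to the law of large numbers and the variance envelope above.
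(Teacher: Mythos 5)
Your argument is correct in substance, and for Parts 1 and 3 it takes a genuinely different route from the paper. For Part 1 the paper simply invokes Theorem 1 of \cite{numbl}, whereas you reprove it via the law of large numbers and the classical Condorcet computation; both are fine. For Part 2 your replica construction is the paper's computation in disguise: the paper expands $\operatorname{Var}(q_\ell)=\sum\operatorname{Cov}\left(\Pi(I),\Pi(J)\right)$ over pairs of index sets and classifies them by their overlaps, which produces exactly your four coordinate types with weights $s_T,s_{TF},s_{TF},s_F$, and your bookkeeping of the ranges of $k,m,h$, of the indicator $\delta$, and of the vanishing $m=0$ slice is accurate. The real divergence is Part 3: the paper's proof (largely suppressed in the text, available ``on request'') runs a normal approximation to the multinomial form of (\ref{sz2}), while your envelope $\left(\mu_{q_\ell}\right)^2\le E q_\ell^2\le\mu_{q_\ell}$, hence $\sigma^2_{q_\ell}\le\mu_{q_\ell}(1-\mu_{q_\ell})$, combined with Part 1 is much shorter and uses only $\mu_p\ne 1/2$ --- neither $s_{TF}>0$ nor $s_T\ne 1/2$ plays any role.

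That extra strength, however, exposes an inconsistency that your closing sentences try to smooth over rather than confront. If the envelope argument gives $\sigma^2_{q_\ell}\to 0$ for every $\mu_p\ne 1/2$, then the sub-case $s_T=1/2$, which is compatible with all the other hypotheses (take $p$ degenerate at $1/\sqrt{2}$: then $\mu_p\ne 1/2$, $s_{TF}>0$, $s_T=1/2$, and $q_\ell$ is deterministic, so its variance is identically $0$), cannot ``collect into the limiting value asserted at the end of the statement.'' The asserted limit $1$ is in fact impossible for any $[0,1]$-valued random variable, whose variance is at most $1/4$; what the paper's normal-approximation sketch tracks in that boundary case is the limit of the triple sum $E q_\ell^2$ alone, not of the variance. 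So you should either drop your final hedge or state plainly that your argument shows the auxiliary hypotheses to be superfluous and the $s_T=1/2$ clause of the lemma not to be taken at face value; as written, your last paragraph asserts two mutually exclusive conclusions for the same parameter values.
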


\begin{proof}
See Appendix \ref{Proof_Lemma_Eq_Varq} for the proof.
\end{proof}

Notice that the condition $\ell\rightarrow\infty$ naturally assumes the same for the pool size with $n\rightarrow\infty$ in Lemma \ref{Lemma_Eq_Varq}. As a demonstrative example for the first part of Lemma \ref{Lemma_Eq_Varq}, see Figure \ref{ConvMu} regarding the three possible accuracy limits described in (\ref{limE}) with respective \emph{Beta}$(\alpha_p,\beta_p)$ distributions for $p$. 

\begin{figure}[!ht] 
\centering
\includegraphics[height=2.8in]{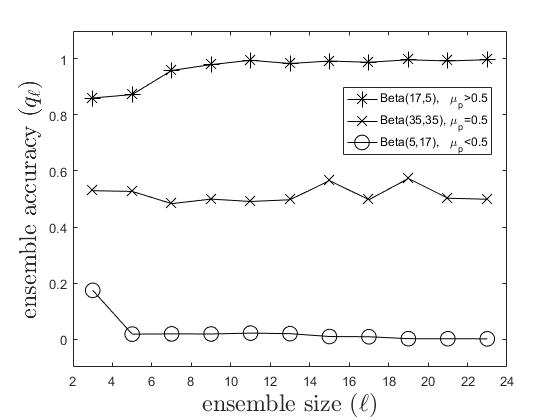}
\captionsetup{format=hang}
  \caption{Convergence of ensemble accuracies for member accuracies coming from different \emph{Beta}$(\alpha_p,\beta_p)$ distributions.}
\label{ConvMu}
\end{figure}

Now, to devise a stochastic model, we start with checking the possible distributions of the member accuracy values $p_i$ to estimate the ensemble accuracy. Then, we extend our model regarding this estimation by incorporating time information, as well. Notice that the estimation of the ensemble accuracy will be exploited to derive a stopping rule for the ensemble selection process. 

\subsection{Estimation of the distribution of member accuracies}

Among the various possibilities, we have found that the \emph{beta} distribution is a very good choice to analyze the distribution of member accuracies. The main reason is that \emph{beta} concentrates on the interval $[0,1]$, that is, it can exactly capture the domain for the smallest/largest accuracy. Moreover, the \emph{beta} distribution is able to provide  density functions of various shapes that often appear in practice. Thus, to start the formal description, let the variate $p$ be distributed as \emph{Beta}$\left(  \alpha_p,\beta_p\right)$ with density
\begin{equation}
b\left(  x;\alpha_p,\beta_p\right)  =\frac{x^{\alpha_p-1}\left(  1-x\right)
^{\beta_p-1}}{B\left(  \alpha_p,\beta_p\right)  },
\end{equation}
where $B\left(  \alpha_{p},\beta_{p}\right)  =\Gamma\left(  \alpha_{p}\right)
\Gamma\left(  \beta_{p}\right)  /\Gamma\left(  \alpha_{p}+\beta_{p}\right)  $. In this case,
\begin{equation}
\label{eqep}
\mu_p=\alpha_p/\left(  \alpha_p+\beta_p\right)  ,
\end{equation}
and $\mu_p\in\left(  1/2,1\right)  $ if and only if $\alpha_p>\beta_p$. If
$\alpha_p=\beta_p$, then $\mu_p=1/2$. In the case of $\alpha_p>\beta_p$, the mode is also
greater than $1/2$. 
The mode is infinite if $\beta_p<1$; therefore, we exclude
this situation and we assume from now on that
\begin{equation}
1<\beta_p<\alpha_p.
\end{equation}
The variance of $p$ is
\begin{equation}
\label{eqvarp}
\sigma^{2}_p=\frac{\alpha_p\beta_p}{\left(  \alpha_p
+\beta_p\right)  ^{2}\left(  \alpha_p+\beta_p+1\right)  }.
\end{equation}
Since $\mu_{q_{\ell}}$, and $\sigma_{q_{\ell}}^2$ depend on $\mu_p$, and
$\sigma^{2}_p $ according to
(\ref{varhatoE}) and (\ref{sz2}) respectively, one can calculate both of
them explicitly. The convergence of $\mu_{q_{\ell}}$ to 1 is fast if $\mu_p$ is close
to $1$, i.e., $\beta_p\ll\alpha_p$; for instance, if $\alpha_p=17$, $\beta_p=5$.
Simulations show that the speed of the convergence of $\sigma_{q_{\ell}}^2$ is exponential; hence, the usual square-root law does
not provide the Central Limit Theorem for $q_{\ell} $.

In practice, we perform a \emph{beta} fit on the $p_i$'s $(i=1,\dots,n)$. If a fit is found at least at the confidence level 0.95, we take the parameters $\alpha_p, \beta_p$ provided by the fit and calculate $\mu_p, \sigma^{2}_p$ by (\ref{eqep}) and (\ref{eqvarp}), respectively. If the \textsl{beta} fit is rejected, then $\mu_p$ and $\sigma^{2}_p$ are estimated from the $p_i$'s as the empirical mean and variance:
\begin{equation}
\label{empp}
\mu_p=\frac{1}{n}\sum\limits_{i=1}^n p_i,~
\sigma^{2}_p=\frac{1}{n-1}\sum\limits_{i=1}^n (p_i-\mu_p)^2.
\end{equation}
To simplify our further notation we do not indicate whether the mean and variance have been estimated from the fitted distribution or empirically.

\subsection{Adding time constraints to the model}
\label{sec:time}

Now, we turn to the case when together with the item accuracy $p_{i}$, we consider its running time $t_{i}$, as well. The common distribution of a random time is exponential, so let  $\tau$ be an exponential distribution with density
$\lambda\exp\left(  -\lambda t\right)  $.  If $p$ is distributed as \emph{Beta}$\left(  \alpha_p,\beta_p\right)$, then with setting $\lambda=1-p$ for a given $p$,
the distribution of $\lambda$ becomes \emph{Beta}$\left(  \beta_p,\alpha_p\right)$. 

This is a reasonable behavior of time because it is quite natural to assume that more accurate components require more resources such as a larger amount of computation times. On the other hand, the selection procedure becomes trivial, if, e.g., the time and accuracy are not inversely proportional, since then the most accurate member is also the fastest one; therefore, it should be selected first by following this strategy for the remaining members until reaching the time limit. For some other possible simple accuracy--time relations, see our preliminary work \cite{7899637}.

For a given time constraint $T$, consider the
random number $\ell_{T}$ such that
\begin{equation}
\sum_{j=0}^{\ell_{T}}\tau_{j}\leq T. 
\end{equation}
With the following lemma, our purpose is to provide an estimation $\widehat{{\ell}_{T}}$ for the expected size of the composed ensemble and incorporate this information in our stochastic characterization of $q_\ell$. 

\begin{lemma}
\label{LemmaTime}
Let $\tau$ be an exponential distribution with density $\lambda\exp\left(
-\lambda t\right)  $ under the condition that the parameter $\lambda$ is
distributed as \textsl{Beta}$\left(  \beta_p,\alpha_p\right)$, where $2<\beta_p<\alpha_p$.

\begin{enumerate}
\item Then, the expected time for the sum of $n$ variables is
\begin{equation}
\sum_{k=0}^{n}E\tau_{k}=n\left(  1+\frac{\alpha_p}{\beta_p-1}\right)
\end{equation}
with variance
\begin{equation}
Var\left(  \sum_{k=0}^{n}\tau_{k}\right)  =n\left(  1+\frac{\alpha_p}{\beta_p
-2}\right)  .
\end{equation}
This implies that the estimated number of ensemble members up to time $T$ is
$
\widehat{\ell_{T}}=\left\lceil T\frac{\beta_p-1}{\alpha_p+\beta_p-1}\right\rceil.
$

\item If the interarrival times $\tau_{j}$ correspond to a given $T$ and $\lambda$ generated from $\textsl{Beta}\left(  \beta_p,\alpha_p\right)$, then
\begin{equation}
\label{estN}
E(\ell_{T})=\frac{\beta_p}{\alpha_p+\beta_p}T.
\end{equation}

\item If each component of pair $\left(  \lambda_{j},\tau_{j}\right)  $ are independent
copies of $\lambda$ and $\tau_{j}$ corresponds to $\lambda_{j}$, then%
\begin{equation}
E(\ell_{T})=T\frac{\beta_p-1}{\alpha_p+\beta_p-1}.
\end{equation}
\end{enumerate}

In both cases 2) and 3), $\ell_{T}$ is distributed as Poisson with parameter
$T/E\tau_{1}$, which implies that $Var\left(  \ell_{T}\right)  =E(\ell_{T})$ and the
estimation of $\ell_{T}$ is
\begin{equation}
\widehat{{\ell}_{T}}=T/{\overline{\tau}}.
\end{equation}

\end{lemma}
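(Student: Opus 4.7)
The plan is to reduce everything to (i) moments of the \emph{Beta} distribution and its inverse, and (ii) standard conditioning arguments for the exponential distribution together with Poisson-process / renewal-theory facts. The central computational ingredient is that for $\lambda\sim$ \emph{Beta}$(\beta_p,\alpha_p)$ the inverse moments admit closed forms via the identity $B(a,b)=\Gamma(a)\Gamma(b)/\Gamma(a+b)$: namely $E[1/\lambda]=B(\beta_p-1,\alpha_p)/B(\beta_p,\alpha_p)=(\alpha_p+\beta_p-1)/(\beta_p-1)=1+\alpha_p/(\beta_p-1)$, which requires $\beta_p>1$, and analogously an explicit expression for $E[1/\lambda^2]$ which requires $\beta_p>2$ -- precisely why the hypothesis enforces $2<\beta_p<\alpha_p$.

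For part~1, I would condition on $\lambda$: since $\tau\mid\lambda\sim\exp(\lambda)$, one has $E[\tau\mid\lambda]=1/\lambda$ and $\mathrm{Var}(\tau\mid\lambda)=1/\lambda^{2}$. Linearity of expectation then gives $\sum_{k=0}^{n}E\tau_{k}=nE[1/\lambda]=n(1+\alpha_p/(\beta_p-1))$, and the law of total variance together with the closed form for $E[1/\lambda^{2}]$ yields the stated $n(1+\alpha_p/(\beta_p-2))$ after simplification. The estimator $\widehat{\ell_{T}}$ is obtained by inverting the mean waiting time: one packs the expected number $T/E[\tau]=T(\beta_p-1)/(\alpha_p+\beta_p-1)$ of members into $[0,T]$ and rounds up.

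For part~2 all interarrivals share a single $\lambda$ drawn once from \emph{Beta}$(\beta_p,\alpha_p)$, so conditionally on $\lambda$ we have a homogeneous Poisson process of rate $\lambda$ and $\ell_{T}\mid\lambda\sim\mathrm{Poisson}(\lambda T)$. The tower property with the beta mean $E[\lambda]=\beta_p/(\alpha_p+\beta_p)$ immediately gives $E[\ell_{T}]=TE[\lambda]=T\beta_p/(\alpha_p+\beta_p)$. For part~3 the $\tau_{j}$ are i.i.d.\ with common mean $E[\tau_{1}]$ from part~1, so the elementary renewal theorem (or Wald's identity applied at the first-passage index $\ell_{T}+1$) gives $E[\ell_{T}]\approx T/E[\tau_{1}]=T(\beta_p-1)/(\alpha_p+\beta_p-1)$.

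The common conclusion claims that $\ell_{T}$ is Poisson with parameter $T/E\tau_{1}$ in both settings, whence $\mathrm{Var}(\ell_{T})=E[\ell_{T}]$ and, after replacing $E[\tau_{1}]$ by the empirical mean, the plug-in estimator $\widehat{\ell_{T}}=T/\overline{\tau}$. The main obstacle is justifying this Poisson claim cleanly: in part~2 the unconditional distribution is actually a beta-mixture of Poissons (its variance therefore contains an extra $T^{2}\mathrm{Var}(\lambda)$ term), and in part~3 the marginal of $\tau_{j}$ is a Lomax-type beta--exponential mixture rather than a plain exponential, so the counting process is only a renewal process. I would handle both by appealing to the large-$T$ renewal/Poisson-limit regime, and by noting that $\widehat{\ell_{T}}=T/\overline{\tau}$ is the natural rate-inversion estimator that pins down the Poisson approximation; the remaining identities are then direct algebra from the beta moments.
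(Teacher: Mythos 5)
Your treatment of the mean in part~1 coincides with the paper's own proof: the paper likewise conditions on $\lambda$, writes $E\tau=E\,E(\tau\mid\lambda)=E[1/\lambda]$, and evaluates the Beta integral as $\Gamma(\beta_p-1)\Gamma(\alpha_p+\beta_p)/\bigl(\Gamma(\alpha_p+\beta_p-1)\Gamma(\beta_p)\bigr)=1+\alpha_p/(\beta_p-1)$, so that portion of your argument matches exactly. For everything past part~1 the paper gives no proof at all ("the rest of the lemma is well known"), so your sketches for parts~2 and~3 --- the mixed-Poisson representation with $E\ell_T=TE\lambda=T\beta_p/(\alpha_p+\beta_p)$ via the tower property, and the renewal/Wald argument giving $T/E\tau_1$ --- supply content the paper omits. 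Your caveat that $\ell_T$ is \emph{not} exactly Poisson unconditionally (a Beta mixture of Poissons in case~2, hence overdispersed; a plain renewal count in case~3) is correct and is something the paper silently glosses over; resolving it only in a large-$T$ asymptotic sense is the honest reading of the statement.

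The step that does not survive scrutiny is the variance in part~1. The law of total variance gives $\mathrm{Var}(\tau)=E[1/\lambda^{2}]+\mathrm{Var}(1/\lambda)=2E[1/\lambda^{2}]-(E[1/\lambda])^{2}$ with
\[
E[1/\lambda^{2}]=\frac{(\alpha_p+\beta_p-1)(\alpha_p+\beta_p-2)}{(\beta_p-1)(\beta_p-2)},
\]
and this does not simplify to $1+\alpha_p/(\beta_p-2)$: for $\alpha_p=17$, $\beta_p=5$ one gets $2\cdot 35-(21/4)^{2}\approx 42.4$ versus $20/3$. So your assertion that the stated formula follows "after simplification" is a genuine gap --- the computation you describe produces a different (larger) number. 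To be fair, the paper's own derivation is no better: it identifies $\mathrm{Var}(\tau)$ with $E\,E\bigl((\tau-E(\tau\mid\lambda))^{2}\mid\lambda\bigr)=E[1/\lambda^{2}]$, dropping the $\mathrm{Var}(E(\tau\mid\lambda))$ term, and then evaluates that integral as $1+\alpha_p/(\beta_p-2)$, which is actually $E[1/\lambda^{2}]/E[1/\lambda]$ rather than $E[1/\lambda^{2}]$. You should either report the correct variance or explicitly flag the discrepancy; as written, your proof claims an algebraic identity that is false.
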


\begin{proof}
See Appendix \ref{Proof_LemmaTime} for the proof. 
\end{proof}

So far, we have assumed that $p$ is distributed as \emph{beta} to calculate $\widehat{\ell_{T}}$ by Lemma \ref{LemmaTime}. 
If this is not the case, we consider the following simple and obvious calculation for the approximate number of $\ell$ under the time constraint $T$:
\begin{equation}
\label{nhat2}
\widehat{\ell_{T}}=\left\lceil nT\bigg/{\sum_{i=1}^{n}t_{i}}\right\rceil
=\left\lceil T/{\overline{t}}\right\rceil;
\end{equation}
another alternative to derive $\widehat{  \ell_{T}}$ in this case is discussed in section \ref{sec:disc}.
In either way it is derived, the value $\widehat{\ell_{T}}$ will be used in the stopping rule in our ensemble selection procedure; the proper details will be given next.

\subsection{Stopping rule for ensemble selection}
\label{subs:stop}

The procedure of finding  $\left(  {\cal L}_{0},\ell_{0}\right)$ is a selection task that is NP-hard. We propose an algorithm such that we stop the selection when the value of $q_{\ell}\left( {\cal L}\right)$ is sufficiently close to the possible maximum, which is not known. To be able to do so, we must give a proper stochastic characterization of $q_{\ell}$ by also settling on the calculation of $\mu_{q_\ell}$ and $\sigma^2_{q_\ell}$ via  Lemma \ref{Lemma_Eq_Varq}. First, notice that the values of $q_{\ell}\ $are in $\left(  0,1\right)$; indeed, it is positive and
\begin{equation}
\begin{split}
q_{\ell} &  =\sum_{k=\left\lfloor \frac{\ell}{2}\right\rfloor +1}^{\ell}\sum
_{\substack{I\subseteq {\cal N}\\\left\vert I\right\vert =k}}%
{\displaystyle\prod\limits_{i\in I}}
p_{i}%
{\displaystyle\prod\limits_{j\in {\cal N}\backslash I}}
\left(  1-p_{j}\right)  \\
&  <%
{\displaystyle\prod\limits_{j}}
\left(  p_{j}+\left(  1-p_{j}\right)  \right)  =1.
\end{split}
\end{equation}
For the case when $p_{i}$'s are \emph{beta} distributed, the product of independent \emph{beta} variates can be close to \emph{beta} again; see \cite{tang1984distribution}. We have also performed MC simulation and found that \emph{beta} distributions fit $q_\ell$ particularly well, compared to, e.g., the gamma, normal, Weibull, and extreme-valued distributions. Specifically, though the \emph{beta} behavior of $q_\ell$ was naturally more stable for \emph{beta} distributed $p_{i}$'s, the usual behavior of $q_\ell$ was also the same for non-\emph{beta} $p_{i}$'s. 

Thus, to provide a description of the stochastic behavior of $q$, we consider the following strategy. With a primary assumption on the $\emph{Beta}(\alpha_q, \beta_q)$ distribution of $q_\ell$, we calculate $\alpha_q$ and $\beta_q$ as
\begin{equation} \label{Beta_est}
\begin{split}
\alpha_q=\left(\frac{1-\mu_q}{\sigma_q^2}-\frac{1}{\mu_q}\right) \mu_q^2, ~~
\beta_q=\alpha_q\left(\frac{1}{\mu_q}-1\right).
\end{split}
\end{equation}
If time information is provided for the pool items, we calculate $\widehat{\ell_{T}}$ by Lemma \ref{LemmaTime}, and as a simpler notation, we will write $\widehat{\ell}$ from now on. If time information is not available, we will set $\widehat{\ell}=n$.

Next, we decide whether $q_\ell$ should be considered as \emph{beta} with requiring $1<\beta_q<\alpha_q$ to be fulfilled to have a mode that is larger than 1/2 and finite. If this condition does not hold, we reject the \emph{beta} behavior of $q_\ell$, and based on simulations, we characterize it as a normal distribution and stop the search if 
\begin{equation}
\label{stopnorm}
q_{\ell}\geq
\kappa_{0.9}\sigma_{q_{\widehat{\ell}}}/\sqrt{\widehat{\ell}}+ \mu_{q_{ \widehat{\ell}}}= STOP,
\end{equation}
where $\kappa_{0.9} $ is the 0.9 quantile of the standard normal distribution.
Otherwise, when $q_\ell$ is considered \emph{beta}, we calculate the mode $\nu$ of \emph{Beta}($\alpha_q$, $\beta_q$) for $q_{\ell}$ as
\begin{equation}
\nu=\frac{\alpha_q-1}{\alpha_q+\beta_q-2},
\label{loc_mu}
\end{equation}
and the Pearson's first skewness coefficient as \begin{equation}
\gamma=\frac{1-\nu}{ {\sigma_{q_ {\widehat{\ell}}}}} \label{loc_gamma}.
\end{equation} 
Then, we  use Table \ref{tabl} to select the appropriate probability value $\varrho_{q_{\widehat{\ell}}}$; the entries are determined by simulation in the case of
$ 2 \leq\beta_q < \alpha_q$.

 We stop the selection when the ensemble accuracy reaches the value of the inverse cumulative distribution $F_{\alpha_q ,\beta_q}^{-1}(  \varrho_{q_{\widehat{\ell}}})$ of $\emph{Beta}(\alpha_q,\beta_q)$ in the given probability, that is, when
\begin{equation}
\label{stopBeta}
q_{\ell}\geq F_{\alpha_q ,\beta_q}^{-1}( \varrho_{q_{\widehat{\ell}}})=STOP.
\end{equation}

\begin{table}[ht] 
\caption{Probability values $\varrho_{q_{\widehat{\ell}}}$ for stopping thresholds for different skewness coefficients $\gamma$.}
\label{tabl}
\begin{center}
\begin{tabular}{|c|c|}
\hline
 $\gamma$ & $\varrho_{q_{\widehat{\ell}}}$ \\
\hline
$\gamma\leq 1$ & $0.6$  \\
\hline
$1<\gamma\leq 2.5$ & $0.8$
\\
\hline
$2.5<\gamma\leq 3.5$& $0.9$ 
\\
\hline
$3.5<\gamma$& $0.95$  \\
\hline
\end{tabular}
\end{center}
\end{table}

In either via (\ref{stopnorm}) or (\ref{stopBeta}), an estimation for the ensemble accuracy is gained; we obtain a STOP value to stop the stochastic search. However, there is some chance that STOP is not exceeded, though in our experiments it has never occurred. Thus, to avoid an infinite loop, we consider a maximum allowed step number MAXSTEP as an escaping stopping rule. Namely, to obtain MAXSTEP, we apply Stirling's approximation
\begin{equation}
\mbox{MAXSTEP}=\binom{n}{\widehat{\ell}}
\sim n^{\widehat{\ell}}/{\widehat{\ell}!},\label{final_stop}
\end{equation}
assuming that $\widehat{  \ell}/n\rightarrow0$. %
This is a reasonable app\-roach since  $\widehat{\ell}$ is calculated according to Lemma \ref{LemmaTime} or (\ref{nhat2}).
The formal description of our proposed ensemble selection method is enclosed in Algorithm \ref{alg:propsearch}. 

\begin{algorithm}
\caption{Proposed Ensemble Creation Method.}
\label{alg:propsearch}
\begin{algorithmic}[1]%
\REQUIRE [NO-TIME]: Pool $\mathcal{D}=\left\{p_i \right\}_{i=1}^n$. 
\smallskip
\hrule
\smallskip
\REQUIRE [TIME]:
\begin{tabular}[t]{l} 
 Pool $\mathcal{D}=\left\{
\left(p_{i},t_{i}\right)\right\}_{i=1}^n$,\\ 
 Total allowed time $T$.\\ 
\end{tabular}
\smallskip
\hrule
\smallskip
\ENSURE An ensemble $\mbox{MAXENS}\subseteq\mathcal{D}$ to maximize system accuracy (\ref{accuracy}) within time $T$ as in (\ref{qTcondition}).
\hrule
\smallskip

\STATE
Calculate the mean $\mu_p$ and std $\sigma_{p}$ 
for $\{p_i\}_{i=1}^n$ 
\STATEx by (\ref{eqep}) and (\ref{eqvarp}) (if a \emph{beta} fits to $p$) or 
\STATEx empirically (if $p$ is not \emph{beta}) by (\ref{empp})

 \Switch{\textbf{Input}}
    \Case{NO-TIME}
      \STATE $\widehat{\ell}\gets n$
    \EndCase
    \Case{TIME}
      \STATE Estimate \# of members $\widehat{\ell}$ for $T$ by 
      \STATEx Lemma \ref{LemmaTime} if a 
      \emph{beta} fits to $p$, or by (\ref{nhat2}) 
      \STATEx if $p$ is not \emph{beta}
       \EndCase
  \EndSwitch

\STATE  Calculate
$\mu_{q_{\widehat{\ell}}}$ by (\ref{varhatoE}) and $\sigma^2_{q_{\widehat{\ell}}}$ by (\ref{sz2})

\STATE Calculate $\alpha_q$, $\beta_q$ by (\ref{Beta_est})
\IF {$1<\beta_q<\alpha_q$} 
\STATE{Calculate cdf. $F_{\alpha_q,\beta_q}$, 
$\nu$, $\gamma$, $\varrho_{q_{\widehat{\ell}}}$ by (\ref{loc_mu}), (\ref{loc_gamma}) 
\STATEx and Table \ref{tabl}, and adjust STOP with (\ref{stopBeta})}
\ELSE
\STATE{Adjust STOP with (\ref{stopnorm})}
\ENDIF
\STATE
Calculate MAXSTEP by (\ref{final_stop})
 \Switch{Input}
    \Case{NO-TIME}
      \STATE Compose ensemble by SA using STOP
      \STATEx and MAXSTEP for the stopping rule
    \EndCase
    \Case{TIME}
      \STATE Compose ensemble either by Algorithm \ref{alg:sajat}
      \STATEx (SHErLoCk) or SA using STOP and  
      \STATEx MAXSTEP for the stopping rule
    \EndCase
  \EndSwitch

\end{algorithmic}%
\end{algorithm}%

Before providing our detailed empirical results in section \ref{sec:empanal}, in Table \ref{tabl2} we summarize our findings for Algorithm \ref{alg:propsearch} on simulations. Namely, in two respective tests with $i=1,\dots,30$ and $i=1,\dots,100$, we have generated the $p_i$'s to come from $\emph{Beta}(17,5)$ and the execution times $t_i$ from conditional exponential distributions with parameters $\lambda=1-p_i$. The time constraint $T$ was set in seconds to $30\%$ of the total time $\sum_{i=1}^{30} t_i$ for the first, and $20\%$ of $\sum_{i=1}^{100} t_i$ for the second test. Both tests were repeated 100 times, and we have taken the averages of the obtained precisions. 
As our primary aim, we have checked whether the stopping rule of the stochastic search indeed led to a reasonable computational gain. For the sake of completeness, in Table \ref{tabl2} we have also shown the results regarding letting the search continue in the long run (stopped by MAXSTEP), though in each of our tests, the STOP value has been exceeded much earlier. Secondarily, 
we have compared SA with our selection method SHErLoCk given in Algorithm \ref{alg:sajat}. For Table \ref{tabl2}, we can conclude that applying our stopping rule by using STOP saved considerable computational time compared with the exhaustive search that culminated by stopping it with MAXSTEP with a negligible drop in accuracy. Moreover, our approach has found efficient ensembles quicker than SA. These impressions have also been confirmed by the empirical evaluations on real data described in the next section.

\begin{table}[ht]

\caption{Result of Algorithm \ref{alg:propsearch} on simulations.}
\label{tabl2}
\centering
\tabcolsep=0.11cm
\begin{tabular}{|c||c|c||c|c|}
\hline

Search &  \multicolumn{2}{c||}{Ensemble accuracy} & \multicolumn{2}{c|}{Comp. time (secs)} \\ \cline{2-5}

method & MAXSTEP & STOP & MAXSTEP & STOP 
 \\
\hline
SHErLoCk
($n$=30)&
$99.56\%$ & $ 99.39\%$  & $ 60.03 $& $ 0.08 $\\

\hline
SA ($n$=30)& $  98.97\%$ &  $98.91 \%$ & $87.40$ & $ 0.30
$ 
\\
\hline
SHErLoCk 
($n$=100) & $99.66\%$ &  $  99.61\%$ & 
$294.58$ & $ 1.54$ 

\\
\hline
SA ($n$=100)& $99.38\%$ &  $99.37\%$ & $ 638.39$ & $1.58 $ \\
\hline
\end{tabular}
\end{table}

\section{Empirical analysis}
\label{sec:empanal}

In this section, we demonstrate the efficiency of our models through an exhaustive experimental test on publicly available data. Our first experiment considers the possibility of organizing competing approaches with different accuracies into an ensemble. In this scenario, accuracy values correspond to final scores of participants of Kaggle\footnote{www.kaggle.com} challenges without cost/time information provided. Our second setup for ensemble creation considers machine learning-based binary classifiers as possible members; the performance evaluation is performed on several UCI Machine Learning Repository \cite{Dua:2017} datasets with the training times considered as costs.    

\subsection{Kaggle challenges}

Kaggle is an open online platform for predictive modeling and analytics competitions with the aim of solving real-world machine learning problems provided by companies or users. The main idea behind this crowd-sourcing approach is that a countless number of different strategies might exist to solve a specific task, and it is not possible to know beforehand which one is the most effective. Though primarily only the scores of the participating algorithms can be gathered from the Kaggle site, as a possible future direction, we are curious regarding whether creating ensembles from the various strategies could lead to an improvement regarding the desired task. 

Not all the Kaggle competitions are suitable to test our models since in the current content, we focus on majority voting-based ensemble creation. Consequently, we have collected only such competitions and corresponding scores where majority voting-based aggregation could take place. More precisely, we have restricted our focus only to such competition metrics based on which majority voting can be realized. Such metrics  
include quadratic weighted kappa,  area under the ROC curve (AUC), log loss, normalized Gini coefficient. For concrete competitions where these metrics were applied, we analyze the following ones: Diabetic Retinopathy Detection\footnote{www.kaggle.com/c/diabetic-retinopathy-detection}, DonorsChoose.org Application Screening \footnote{www.kaggle.com/c/donorschoose-application-screening}, Statoil/C-CORE Iceberg Classifier Challenge\footnote{www.kaggle.com/c/statoil-iceberg-classifier-challenge}, WSDM - KKBox's Churn Prediction Challenge\footnote{www.kaggle.com/c/kkbox-churn-prediction-challenge}, and Porto Seguro’s Safe Driver Prediction\footnote{www.kaggle.com/c/porto-seguro-safe-driver-prediction/data}.

For our analytics, on the one hand it is interesting to observe the distribution of the final score of the competitors, which is often affected by the volume of the prize money offered to the winner. Moreover, accuracy measurement is usually scaled to the interval $[0,1]$, with 0 for the worst and 1 for the perfect performance, which allows us to test our results regarding the \emph{beta} distributions. As a drawback of Kaggle data, access to the resource constraints corresponding to the competing algorithms (e.g., training/execution times) is rather limited; such data are provided for only a few competitions, primarily in terms of execution time interval.

Thus, to summarize our experimental setup, we interpret the competing solutions of a Kaggle challenge as the $\{D_1, D_2, \dots, D_n\}$ pool, where the score of $D_i$ is used for the accuracy term $p_i\in [0,1]$ in our model. Then, we apply a \emph{beta} fit for each investigated challenge to determine whether a \emph{beta} distribution fits the corresponding scores or not. If the test is rejected, we can still use the estimation for the joint behavior $q$ using (\ref{empp}) and (\ref{nhat2}). If the \emph{beta} test is accepted, we can also apply our corresponding results using (\ref{eqep}), (\ref{eqvarp}), and Lemma \ref{LemmaTime}. Notice that reliably fitting a model for the scores of the competitors might lead to a better insight of the true behavior of the data of the given field, also for the established expectations there. 

As observed from Table \ref{tab:kaggle_m30}, SA was able to stop much earlier with a slight loss in accuracy using the suggested stopping rule (STOP) in finding the optimal ensemble. Our approach SHErLoCk given in Algorithm \ref{alg:sajat} has been excluded from this analysis since no cost information was available.

\begin{table*}[!ht]

\caption{Ensemble accuracies on the Kaggle datasets found by simulated annealing (SA).}
\label{tab:kaggle_m30}

\centering
\begin{tabular}{|c||c||c||c||c|}
\hline
Dataset &  \multicolumn{2}{c||}{Ensemble accuracy} & \multicolumn{2}{c|}{Computational time (secs)} \\ \hline
 Name &
 {MAXSTEP}& \multicolumn{1}{c||}{STOP}& {MAXSTEP}& \multicolumn{1}{c|}{STOP} \\
\cline{2-5}
\hline
Diabetic Retinopathy Detection  &  94.34\% &   93.19\% &   194.12& 1.31  \\ 
\hline
DonorsChoose.org Application Screening  & 94.78\% & 91.96\%  & 206.89 & 1.67 \\
\hline
Statoil/C-CORE Iceberg Classifier Challenge &  88.42\% &87.76\% &  191.91&  2.23\\ 
\hline
WSDM - KKBox's Churn Prediction Challenge &  96.96\% & 96.32\% & 203.88 & 1.45  \\ 
\hline
Porto Seguro’s Safe Driver Prediction & 92.99\% &89.98\% &  214.28 & 1.95 \\ 
\hline
\multicolumn{1}{|c||}{Average}  & 92.29\% & 90.45\%  & 202.21 & 1.72 \\
\hline

\end{tabular}
\end{table*}

\subsection{Binary classification problems}

The UCI Machine Learning Repository \cite{Dua:2017} is a popular platform to test the performances of machine learning-based approaches, primarily for classification purposes. A large number of datasets are made publicly available here among which our models can be tested on binary classification ones. That is, in this experiment, the members $D_1, D_2, \dots, D_n$ of a pool for ensemble creation are interpreted as binary classifiers, whose outputs can be aggregated by the majority voting rule. Using the ground truth supplied with the datasets, the accuracy term $p_i\in [0,1]$ stands for the classification accuracy of $D_i$. 

The number of commonly applied classifiers is relatively low; therefore to increase the cardinality of the pool, we have also considered a synthetic approach in a similar way to \cite{CAVALCANTI201638}. Namely, we have trained the same base classifier on different training datasets, by which we can synthesize several "different" classifiers. Naturally, this method is able to provide more independent classifiers only if the base classifier is unstable, i.e., minor changes in the training set can lead to major changes in the classifier output; such an unstable classifier is, for example, the perceptron one. 

To summarize our experimental setup for UCI binary classification problems, we have considered base classifiers perceptron \cite{Perceptron}, decision tree \cite{DTree}, Levenberg-Marquardt feedforward neural network \cite{Levenberg}, random neural network\cite{RandomNN}, and discriminative restricted Boltzmann machine classifier \cite{BoltzmanNN} for the datasets MAGIC Gamma Telescope \cite{BOCK2004511}, HIGGS \cite{Baldi2014SearchingFE}, EEG Eye State \cite{Dua:2017}, Musk (Version 2) \cite{NIPS1993_781}, and Spambase \cite{Dua:2017}; datasets of large cardinalities were selected to be able to train synthetic variants of base classifiers on different subsets. To check our models for different numbers of possible ensemble members, the respective pool sizes were set to $n=30$ and $n=100$; the necessary number of classifiers has been reached via synthesizing the base classifiers with training them on different subsets of the training part of the given datasets. In contrast to the Kaggle challenges, in these experiments we were able to retrieve meaningful cost information to devise a knapsack scenario. Namely, for a classifier $D_i$, its training time was adjusted as its cost $t_i$ in our model. Notice that for even the same classifier, it was possible to obtain different $t_i$ values with training its synthetic variants on datasets of different sizes. 
Using this time information, for the estimated size $\widehat{\ell}$ of the optimal ensemble, we could use Lemma \ref{LemmaTime} for $n=30$, while (\ref{nhat2}) for the case $n=100$.

As clearly visible from Tables \ref{tab:m30} and \ref{tab:uci_m100}, our stochastic search strategy SHErLoCk described in Algorithm \ref{alg:sajat} was reasonably faster than SA and slightly dominant in accuracy, as well. Moreover, it can be observed again that applying the stopping rule with the threshold STOP led to an enormous computational advantage for either search strategies with only a small drop in accuracy.

\begin{table*}[!ht]

\caption{Comparing simulated annealing (SA) with the proposed search strategy (SHErLoCk) on binary classification problems of UCI datasets using an ensemble pool of $n=30$ classifiers.}
\label{tab:m30}

\centering
\begin{tabular}{|c|c||c|c||c|c||c|c||c|c|}
\hline
\multicolumn{2}{|c||}{Dataset} &  \multicolumn{4}{c||}{Ensemble accuracy} & \multicolumn{4}{c|}{Computational time (secs)} \\ \hline
\multirow{2}{*}{Name} & \multirow{2}{*}{Size} & \multicolumn{2}{c||}{MAXSTEP}& \multicolumn{2}{c||}{STOP}& \multicolumn{2}{c||}{MAXSTEP}& \multicolumn{2}{c|}{STOP} \\
\cline{3-10}
 &  & SA & SHErLoCk & SA & SHErLoCk  & SA & SHErLoCk & SA & SHErLoCk  \\ 
\hline
MAGIC &  19 020 &  99.34\% & 99.57\% &   99.17\% & 99.29\%  & 171.9
 & 56.47 & 0.41 & 0.18  \\ 
\hline
Spambase &  4 601 &99.68\% &  99.76\% & 98.95\% &  98.77\% &100.98  & 67.90 & 1.67 & 0.39 \\
\hline
HIGGS &  20 000 & 78.03\% & 77.99\% &77.59\% &77.63\%& 158.90 & 69.46&  2.73& 0.93 \\ 
\hline
EEG &  14 980 & 98.71\% & 98.80\% & 95.62\% & 97.43\% & 345.57 &  90.20
& 0.47 & 0.38 \\ 
\hline
Musk &  6 598 & 99.95\% & 99.99\% &99.96\% &  99.98\% &  178.21&  58.89
  & 0.33  & 0.32 \\ 
\hline
\multicolumn{2}{|c||}{Average} & 95.15\% & 95.07\% & 94.26\% & 94.62\% & 191.11& 68.06  & 1.12&0.44\\
\hline

\end{tabular}
\end{table*}

\begin{table*}[!ht]

\caption{Comparing simulated annealing (SA) with the proposed search strategy on binary classification problems of UCI datasets using an ensemble pool of $n=100$ classifiers.}
\label{tab:uci_m100}

\centering
\begin{tabular}{|c|c||c|c||c|c||c|c||c|c|}
\hline
\multicolumn{2}{|c||}{Dataset} &  \multicolumn{4}{c||}{Ensemble accuracy} & \multicolumn{4}{c|}{Computational time (secs)} \\ \hline
\multirow{2}{*}{Name} & \multirow{2}{*}{Size} & \multicolumn{2}{c||}{MAXSTEP}& \multicolumn{2}{c||}{STOP}& \multicolumn{2}{c||}{MAXSTEP}& \multicolumn{2}{c|}{STOP} \\
\cline{3-10}
 &  & SA & SHErLoCk& SA & SHErLoCk  & SA & SHErLoCk & SA & SHErLoCk \\ 
\hline
MAGIC &  19 020 &  99.57\% & 99.59\% &   99.19\% & 99.34\%  & 349.62 & 194.12& 1.31 & 1.32 \\ 
\hline
Spambase &  4 601 &99.79\% & 99.78\% & 98.96\% &  98.88\% &390.89  & 206.89 & 1.67 & 1.41 \\
\hline
HIGGS &  20 000 & 78.08\% & 78.16\% &77.79\% &77.73\%& 378.56 & 191.91&  2.23& 2.11
  \\ 
\hline
EEG &  14 980 & 98.96\% & 98.96\% & 97.32\% & 97.64\% & 453.59 &  203.88
& 1.45 & 1.33 \\ 
\hline
Musk &  6 598 & 99.98\% & 99.99\% &99.98\% &  99.98\% &  475.71&  214.28 & 1.95  & 1.56 \\ 
\hline
\multicolumn{2}{|c||}{Average} & 95.28\% & 95.29\% & 94.65\% & 94.71\% & 409.67 & 202.21 & 1.72 & 1.55\\
\hline

\end{tabular}
\end{table*}

\subsection{Optic disc detection}

The majority voting rule can be applied in a problem to aggregate the outputs of single object detectors in the spatial domain \cite
{Hajdu2013jspatialvoting}; the votes of the members are given in terms of single pixels as candidates for the centroid of the desired object. In this extension, the shape of the desired object defines a geometric constraint, which should be met by the votes that can be aggregated. In 
\cite{Hajdu2013jspatialvoting}, our practical example relates to the detection of a disc-like anatomical component, namely the optic disc (OD) in retinal images. Here, the votes are required to fall inside a disc of diameter $d_{OD}$ to vote together. As more false regions are possible to be formed, the correct decision can be made even if the true votes are not in the majority, as in Figure \ref{OD_detection}. The geometric restriction transforms 
(\ref{accuracy}) to the following form: 
\begin{equation}
q_{\ell}({\cal L})=\sum\limits_{k=0}^{\ell}p_{\ell,k}\left( {\underset{|{\cal I}|= k}{\sum\limits_{\cal I\subseteq {\cal L}}}}
\prod\limits_{i\in \cal I}p_{i}\prod\limits_{j\in {\cal L}\setminus
\cal I}\left( 1-p_{j}\right) \right) .  \label{eq:spatial}
\end{equation}%
In (\ref{eq:spatial}), the terms $p_{\ell,k}$ describe the probability that a correct decision is made by supposing that we have $k$ correct votes out of $\ell$. For the terms $p_{\ell,k}$ $(k=0,1,\dots,\ell)$, in general, we have that $0\leq p_{\ell,0}\leq p_{\ell,1}\leq\dots\leq p_{\ell,\ell}\leq 1$.
\begin{figure}[!htb]
\begin{center}
{\includegraphics[height=4.5cm]{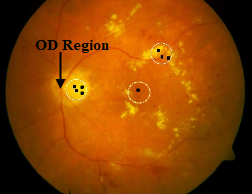}}
\end{center}
\caption[OD detection.]{Successful OD detection with the same number of correct/false ensemble member responses.}
\label{OD_detection}
\end{figure}

In our experiments, the pool consists of eight OD detector algorithms with the following accuracy and running time values: 
$\{(p_i,t_i)\}_{i=1}^8=\{(0.220,31), (0.304,38),(0.319,34), (0.643,69), \\ (0.754,11),(0.765,7),(0.958,21), (0.976,90) \}$ 
with $\sum_{i=1}^8 t_i=301$ secs. 
We can apply our theoretical foundation with some slight modifications to solve the same kind of knapsack problem for the variant (\ref{eq:spatial}), 
transforming the model to reflect the multiplication with the terms $p_{\ell,k}$.

We have empirically derived the values  
$p_{8,k}=\{ 0,0.11,0.70,0.93,0.99,1.00,1.00,1.00,1.00\}$ 
for (\ref{eq:spatial}) in our task.
To adopt our approach by following the logic of Algorithm \ref{alg:propsearch}, we need to determine a STOP value for the search based on $\mu_p$ and $\sigma_p$ (calculated by (\ref{empp})), and 
$\widehat{\ell}$ (calculated by (\ref{nhat2})). However, since now the energy function is transformed by the terms $p_{\ell,k}$ in (\ref{eq:spatial}), we must borrow the corresponding theoretical results from \cite{tiba2019optimizing} to derive the mean $\mu_{q_{\widehat{\ell}}}$ 
instead of (\ref{varhatoE}) proposed in Algorithm \ref{alg:propsearch}. Accordingly, we had to find a continuous function $\cal F$ that fit to the values $p_{\ell,k}$, which was evaluated by regression and resulted in ${\cal F} (x)={b}/({b+{x}^a/({1-x})^a})$ with $a=-3.43$ and $b=101.7$, as also plotted in Figure \ref{pnkcurve}.
Now, by using Theorem 1 from \cite{tiba2019optimizing}, we have gained $\mu_{q_{\widehat{\ell}}}={\cal F}(\mu_p)$.

\begin{figure}[!ht] 
\centering
\includegraphics[height=5cm,width=8cm]{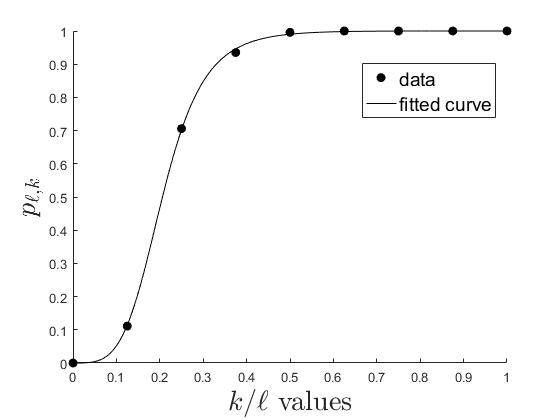}
\caption[OD detection.]{Determining the constrained majority voting probabilities $p_{\ell,k}$ for our OD detector ensemble.}
\label{pnkcurve}
\end{figure} 

For our experiment to search for the best ensemble, we have set the time constraint to be 80\% of the total running time, with $T= 4(\sum_{i=1}^8 t_i)/5$.
For this setup, we could estimate 
$\widehat{\ell}=7$ and $\mu_{q_{\widehat{\ell}}}=0.969$ for the expected ensemble size and mean accuracy, respectively.
Then, these values have been considered for Algorithm \ref{alg:propsearch} to compare the performance of our stochastic search method SHErLoCk with SA.
As shown in Table \ref{tabl6}, our search strategy outperformed SA also for the object detection problem both in accuracy and computational time.

\begin{table}[ht]

\caption{Comparing SA with the proposed search strategy SHErLoCk on the OD detection problem.}
\label{tabl6}
\centering
\begin{tabular}{|c||c|c|}
\hline

Search &  \multicolumn{1}{c|}{Ensemble accuracy} & \multicolumn{1}{c|}{Comp. time (secs)} \\ \cline{2-3}

method  & STOP & STOP 
 \\
\hline
SHErLoCk
& $ 99.45\%$  &  $ 0.07 $\\

\hline
SA 
&  $99.43 \%$  & $ 0.16$ 
\\
\hline

\end{tabular}
\end{table}

\section{Discussion}
\label{sec:disc}

For the approximate number $\widehat{{\ell}_{T}}$ of the ensemble size, we have considered (\ref{nhat2}) when the member accuracy $p$ is not a \emph{beta} distribution. As an alternative, notice that 
it is known that for a given $\lambda$, $T$ and an independent exponential distributed
$\tau_{j}$, $\ell_{T}$ is distributed as Poisson with parameter
$\lambda T$. We can use Lemma \ref{LemmaTime} and conclude that for a starting size of the ensemble, one may choose $\widehat{{\ell}_{T}}$ such that the remaining possible values are beyond the 5\% error. It follows that we apply formula either 
\begin{equation}
\textrm{P}\left(  \ell_{T}>m_{0.05}\right)  = 0.05,
\label{nTe_Poisson} %
\end{equation}
where $m_{0.05}$ is the upper quantile of the Poisson distribution with
parameter  $T/\sum_{i=1}^n p_i$, or use the normal
approximation to the Poisson distribution
\begin{equation}
\frac{\ell_{T}-T/\sum_{i=1}^n p_i -0.5}{\sqrt{T/\sum_{i=1}^n t_i}}>1.64,
\end{equation}
which provides us the inequality
\begin{equation}
\ell_{T}>\frac{T}{\sum_{i=1}^n t_i}+0.5+1.64\sqrt{\frac{T}{\sum_{i=1}^n t_i}}=\widehat{\ell_{T}}.
\label{loc_n}
\end{equation}
In our experiments we have used (\ref{nhat2}) instead of (\ref{loc_n}) to obtain $\widehat{\ell_{T}}$, since the latter provided slightly too large estimated size values. However, for other scenarios, it might be worthwhile to try (\ref{loc_n}), as well.

As some additional arguments, we call attention to the following issues regarding those elements of our approach that might need special care or can be adjusted differently in other scenarios:
\begin{itemize}
    \item We have assumed independent member accuracy behavior, providing solid estimation power in our tests. However, in the case of strong member dependencies, deeper discovery of the joint behavior might be needed.
    \item Stirling's approximation considered in (\ref{final_stop}) may provide values that are too small for the parameter MAXSTEP in the case of small pools. Since this is an escape parameter, a sufficiently large value should be selected in such cases instead.
    \item The time profile $\lambda=1-p$ considered in section \ref{sec:time} is suited to our data; however, any other relationship between the member accuracy and time can be considered. Nevertheless, the proper derivation of the estimation of the ensemble accuracy might be slightly more laborious.
    \item In (\ref{stopnorm}), we have used a one-tailed (left-side) hypothesis since $q_\ell$ was close to 1. However, if it is not that close to 1, a two-tailed hypothesis can be meaningful, as well. Furthermore, if $\mu_q$ is even smaller (say 0.7), then we can search above this mean by considering a right-side hypothesis. 
  
\end{itemize}

\ifCLASSOPTIONcompsoc
  \section*{Acknowledgments}
\else
  \section*{Acknowledgment}
\fi

This work was supported by the project EFOP-3.6.2-16-2017-00015 supported by the European Union, co-financed by the European Social Fund.

\ifCLASSOPTIONcaptionsoff
  \newpage
\fi

\bibliographystyle{IEEEtran}
\bibliography{main}

\hfill\newpage

\pagenumbering{gobble}

\appendices

\section*{Appendices}

\section{Proof for Lemma \ref{OddLemma}.}
\begin{proof}
\label{Proof_Odd_Lemma}
 Consider a subset $\mathcal{K}$ when  $\mathcal{K}%
=\left\{  1,2,\ldots,2\ell\right\}  $ (otherwise we can renumerate $p_{i}$). We have
\begin{multline}
q_{2\ell}\left(  \mathcal{K}\right)  =\sum_{k=\ell+1}^{2\ell}\sum
_{\substack{I\subseteq\mathcal{K}\\\left\vert I\right\vert =k}}%
{\displaystyle\prod\limits_{i\in I}}
p_{i}%
{\displaystyle\prod\limits_{j\in\mathcal{K}\backslash I}}
\left(  1-p_{j}\right)  \\
=\sum_{k=\ell+1}^{2\ell}\sum_{\substack{I\subseteq\mathcal{K}\\\left\vert
I\right\vert =k}}Q_{2\ell,k}\left(  \mathcal{K},I\right)  ,
\end{multline}
where
\begin{equation}
Q_{2\ell,k}\left(  \mathcal{K},I\right)  =%
{\displaystyle\prod\limits_{i\in I}}
p_{i}%
{\displaystyle\prod\limits_{j\in\mathcal{K}\backslash I}}
\left(  1-p_{j}\right),
\end{equation}
that is, we consider a subset $\mathcal{K}\subseteq\mathcal{N}$ with
$\left\vert \mathcal{K}\right\vert =2\ell$, and $Q_{2\ell,k}\left(
\mathcal{K},I\right)  $ is calculated for an index set $I\subseteq\mathcal{K}$
with $\left\vert I\right\vert =k$. Now, choose an index $a$ from the set
$\mathcal{N}\backslash\mathcal{K}$, i.e., $a>2\ell$, and obtain
\begin{equation}
Q_{2\ell,k}\left(  \mathcal{K},I\right)  =Q_{2\ell,k}\left(  \mathcal{K}%
,I\right)  p_{a}+Q_{2\ell,k}\left(  \mathcal{K},I\right)  \left(
1-p_{a}\right)  .
\end{equation}
The term $Q_{2\ell,k}\left(  \mathcal{K},I\right)  p_{a}=Q_{2\ell
+1,k+1}\left(  \left\{  \mathcal{K},a\right\}  ,\left\{  I,a\right\}  \right)
$ and $\ Q_{2\ell,k}\left(  \mathcal{K},I\right)  \left(  1-p_{a}\right)
=Q_{2\ell+1,k}\left(  \left\{  \mathcal{K},a\right\}  ,I\right)  $; therefore,
\begin{multline}
q_{2\ell}\left( {\cal{K}}\right)   
=\sum_{k=\ell+1}^{2\ell}\sum_{\substack{I\subseteq
{\cal{K}}\\\left\vert I\right\vert =k}}Q_{2\ell,k}\left(  {\cal{K}},I\right)  \\
=\sum_{k=\ell+1}^{2\ell}\left(  \sum_{\substack{I\subseteq {\cal{K}}\\\left\vert
I\right\vert =k}}Q_{2\ell+1,k+1}\left(  \left\{  {\cal{K}},a\right\}  ,\left\{
I,a\right\}  \right)  + \right. \\
\left. +\sum_{\substack{I\subseteq {\cal{K}}\\\left\vert I\right\vert
=k}}Q_{2\ell+1,k}\left(  \left\{  {\cal{K}},a\right\}  ,I\right)  \right)  \\
<\sum_{k=\ell+1}^{2\ell}\sum_{\substack{I\subseteq {\cal{K}}\\\left\vert I\right\vert
=k}}Q_{2\ell+1,k}\left(  {\cal{K}},I\right)  =q_{2\ell+1}\left(  {\cal{K}},a\right)  ,
\end{multline}
since $q_{2\ell+1}\left(  \mathcal{K}\right)  $ includes some extra additional
terms, say $Q_{2\ell+1,k}\left(  \left\{  \mathcal{K},a\right\}  ,I\right)  $,
where $I$ contains $a$. Regarding that $n$ is odd in the series of $q_{\ell}\left(
\mathcal{K}\right)$, there will be an element with odd $\ell$ following an
element of even $\ell$ and the lemma is proved for odd $n$. For the case
when $n$ is even, we consider the $q_{n}\left(  \mathcal{N}\right)  $ and
$q_{n-1}\left(  \mathcal{L}\right)  $, where $\mathcal{L}=\left\{
1,2,\ldots,2\ell-1\right\}  $. Set $n=2\ell$; then,
\begin{equation}
q_{2\ell}\left(  \mathcal{N}\right)  =\sum_{k=\ell+1}^{2\ell}\sum
_{\substack{I\subseteq\mathcal{N}\\\left\vert I\right\vert =k}}Q_{2\ell
,k}\left(  \mathcal{N},I\right) 
\end{equation}
with $\mathcal{N}=\ \left\{  1,2,\ldots,2\ell\right\}  $ and put
\begin{equation}
q_{2\ell-1}\left(  \mathcal{L}\right)  =\sum_{k=\ell}^{2\ell-1}\sum
_{\substack{I\subseteq\mathcal{L}\\\left\vert I\right\vert =k}}Q_{2\ell
-1,k}\left(  \mathcal{L},I\right)  ,
\end{equation}
notice that the number of terms is equal in both sums. If $k=2\ell$, then
\begin{equation}
Q_{2\ell,2\ell}\left(  \mathcal{N},\mathcal{N}\right)  =p_{2\ell}%
Q_{2\ell-1,2\ell-1}\left(  \mathcal{L},\mathcal{L}\right)  ,\label{2elTag}%
\end{equation}
otherwise,
\begin{equation}
Q_{2\ell,k}\left(  \mathcal{N},I\right)  =\left\{
\begin{array}
[c]{ccc}%
p_{2\ell}Q_{2\ell-1,k-1}\left(  \mathcal{L},I\backslash2\ell\right)   &
\text{if} & 2\ell\in I\\
\left(  1-p_{2\ell}\right)  Q_{2\ell-1,k}\left(  \mathcal{L},I\right)   &
\text{if } & 2\ell\notin I
\end{array}
\right.  ,
\end{equation}
hence for $k<2\ell$,
\begin{align}
\sum_{\substack{I\subseteq\mathcal{N}\\\left\vert I\right\vert =k}}Q_{2\ell
,k}\left(  \mathcal{N},I\right)    & =p_{2\ell}\sum_{\substack{I\subseteq
\mathcal{L}\\\left\vert I\right\vert =k-1}}Q_{2\ell-1,k-1}\left(
\mathcal{L},I\right)  \label{TobbiTag} \nonumber \\
& +\left(  1-p_{2\ell}\right)  \sum_{\substack{I\subseteq\mathcal{L}%
\\\left\vert I\right\vert =k}}Q_{2\ell-1,k}\left(  \mathcal{L},I\right)
.
\end{align}
We start summing up $q\left(  \mathcal{N},2\ell\right)  $ from $2\ell$; then,
using (\ref{2elTag}) and (\ref{TobbiTag}), we obtain for the first two terms
\begin{multline}
\sum_{k=2\ell-1}^{2\ell}\sum_{\substack{I\subseteq\mathcal{N}\\\left\vert
I\right\vert =2\ell-1}}Q_{2\ell,k}\left(  \mathcal{N},I\right)  =p_{2\ell
}Q_{2\ell-1,2\ell-1}\left(  \mathcal{L},\mathcal{L}\right)  \\
+p_{2\ell}\sum_{\substack{I\subseteq\mathcal{L}\\\left\vert I\right\vert
=2\ell-2}}Q_{2\ell-1,2\ell-2}\left(  \mathcal{L},I\right)  +\left(
1-p_{2\ell}\right)  Q_{2\ell-1,2\ell-1}\left(  \mathcal{L},\mathcal{L}\right)
\\
=Q_{2\ell-1,2\ell-1}\left(  \mathcal{L},\mathcal{L}\right)  +p_{2\ell}%
\sum_{\substack{I\subseteq\mathcal{L}\\\left\vert I\right\vert =2\ell
-2}}Q_{2\ell-1,2\ell-2}\left(  \mathcal{L},I\right)  .
\end{multline}
If we continue summing up one by one, then induction leads to%
\begin{multline}
q_{2\ell}\left(  \mathcal{N}\right)   =\sum_{k=\ell+1}^{2\ell-1}%
\sum_{\substack{I\subseteq\mathcal{L}\\\left\vert I\right\vert =k}%
}Q_{2\ell-1,k-1}\left(  \mathcal{L},I\right)  \\
 +p_{2\ell}\sum_{\substack{I\subseteq\mathcal{L}\\\left\vert I\right\vert
=\ell}}Q_{2\ell-1,\ell}\left(  \mathcal{L},I\right)
 <q_{2\ell-1}\left(  \mathcal{L}\right)  ,
\end{multline}
since $p_{2\ell}<1$.
\end{proof}

\section{Proof for Proposition \ref{prop:grfo}.}

\begin{proof}
\label{Proof_prop:grfo}
We prove the statement with an example describing the worst-case scenario
for the greedy selection strategy.
Let ${\cal{D}}
=\{D_{1}=(p_{1},t_{1}),D_{2}=(p_{2},t_{2}), \dots
, D_{n}=(p_{n},t_{n})\}$ be the pool, where the index set is denoted by ${\cal{I}}_n=\{1,2,\ldots,n\}$. Let us suppose that $\sum\limits_{i=1}^{n}t_{i}\leq T$,
that is, the time constraint should not be of concern. Let $%
p_{1}=1/2+\varepsilon $, where $0<\varepsilon \leq 1/2$, and $p_{2}=p_{3}=\dots
=p_{n}=1/2+\alpha $ with $0<\alpha <\varepsilon $, where the proper
selection of $\alpha $ will be given below.

The greedy strategy will move $D_{1}$ to $S$ as the most accurate item in its first step. Next, we
try to extend $S$ by adding more members. Since we require odd members, we
try to add 2 items in every selection step. Since all the remaining $n-1$
features have the same behavior, we can check whether $S$ should be extended
via comparing the performance of $S_{1}=\{D_{1}\}$ and $S_{
3}=\{D_{1},D_{2},D_{3}\}$. For the performance of the ensemble $S_{1}$, we
trivially have $q_1({\cal{I}}_1)=p_{1}=1/2+\varepsilon $, where ${\cal{I}}_1=\{1\}$, while for $S_{
3}$ we can
apply (\ref{accuracy}) for the 3-member ensemble, with ${\cal{I}}_3=\{1,2,3\}$ to calculate $q_3({\cal{I}}_{3})$:
\begin{eqnarray}
q_3({\cal{I}}_{3})
&=&p_{1}p_{2}(1-p_{3})+p_{2}p_{3}(1-p_{1})+p_{1}p_{3}(1-p_{2}) \nonumber
\\ 
&+&p_{1}p_{2}p_{3}=\frac{1}{2}+\frac{\varepsilon }{2}+\alpha -2\alpha ^{2}\varepsilon 
\end{eqnarray}%
after the appropriate substitutions and simplifications. Now, if we adjust $%
\alpha $ to have $q_1({\cal{I}}_1)=q_3({\cal{I}}_{3})$, then via solving the equation 
\begin{equation}
\frac{1}{2}+\varepsilon =\frac{1}{2}+\frac{\varepsilon }{2}+\alpha -2\alpha
^{2}\varepsilon 
\end{equation}%
we obtain 
\begin{equation}
\alpha =\frac{1-\sqrt{1-4\varepsilon ^{2}}}{4\varepsilon }.  \label{eq:greedy1}
\end{equation}%
That is, with a selection of $\alpha $ given in (\ref{eq:greedy1}), the
ensemble $S_1=\{D_{1}\}$ is not going to be extended since it does not lead
to improvement. Thus, the strategy stops after the first step with an
ensemble accuracy $1/2+\varepsilon$.

On the other hand, with a sufficiently large $n$, a very accurate ensemble
could be achieved. More precisely, it can be easily seen that $q_n({\cal{I}}_{n})$ is strictly monotonically increasing 
with 
\begin{equation}
\lim_{n\rightarrow \infty } q_n({\cal{I}}_{n})=1.
\end{equation}

Now, by letting $\varepsilon \rightarrow 0$, we can see that for the ensemble
accuracy found with this strategy 
\begin{equation}
\lim_{\varepsilon \rightarrow 0}q_1(S_{1})=1/2,
\end{equation}%
while an ensemble of $\lim\limits_{n\rightarrow \infty }q_n({\cal{I}}_{n})=1$ could
also be found. Hence, the proposition follows.
\end{proof}

\section{Proof for Proposition \ref{prop:grba}.}

\begin{proof} \label{Proof_prop:grba} 
We prove the statement with a similar example to that given in the proof of
Proposition \ref{prop:grfo} in Appendix \ref{Proof_prop:grfo} to describe the worst case scenario.
Let $
{\cal{D}}
=\{D_{1}=(p_{1},t_{1}),D_{2}=(p_{2},t_{2}),\dots
,D_{n}=(p_{n},t_{n})\}$ be the pool and $T$
be the time constraint. Put $p_{1}=1/2+\varepsilon $, where $0<\varepsilon \leq 1/2
$, $t_{1}=T$, and $p_{2}=p_{3}=\dots =p_{m}=1/2+\alpha $, $t_{2}=t_{3}=\dots
=t_{n}=\displaystyle {T}/({n-1})$ with $0<\alpha <\varepsilon $.
If $\alpha $ is properly selected, then $%
q_1({\cal{I}}_{1})=p_{1}<q_{n-1}({\cal{I}}_{n}\setminus {\cal{I}}_{1})$. However, because of the time constraint,
we must remove elements during the selection procedure, since initially $%
\sum\limits_{i=1}^{n}t_{i}=2T>T$. For this requirement, the greedy approach
in the first step will remove any two elements from $D_{2},\dots ,D_{n}$
by decreasing the time with $\displaystyle {2T}/({n-1})$. This selection will go on
until only $D_{1}$ remains in the ensemble. With a proper selection of $\alpha 
$, we have $\lim\limits_{n\rightarrow \infty }q_{n-1}({\cal{I}}_{n}\setminus {\cal{I}}_{1})=1$ and by letting $%
\varepsilon \rightarrow 0$, the proposition follows.
\end{proof}

\section{Proof for Proposition \ref{prop:grbau}.}
\label{proof_prop:grbau}

\begin{proof}
Similar to the proof of Proposition \ref{prop:grba}, we provide an example for the worst case scenario.
Let $D_{1}=(1,T)$, and $D_{2}=D_{3}=D_{4}=(1/2+\varepsilon ,T/3)$ with $%
0<\varepsilon <1/2$. Now, since 
\begin{equation}
u_{1}=\frac{1}{T}<\frac{3/2+3\varepsilon }{T}=u_{2}=u_{3}=u_{4},
\end{equation}%
the backward strategy will remove the less useful component $D_{1}$ first to
maintain the time constraint and will keep the remaining ensemble $\{D_2,D_3,D_4\}$
as the most accurate one, which also fits the time constraint with $\sum\limits_{i=2}^{4}t_i=T$.
By letting $\varepsilon\to 0$, we have $\lim_{\varepsilon\to 0} q_3({\cal{I}}_{4}\setminus {\cal{I}}_{1})
=1/2$. Moreover, notice that the most accurate ensemble would have been $\{D_1\}$ with $q_1({\cal{I}}_{1})=1$ by meeting the time constraint, as well. Thus,
the statement follows.
\end{proof}

\section{Proof for Lemma \ref{Lemma_Eq_Varq}}

\begin{proof}
\label{Proof_Lemma_Eq_Varq}The first part of the lemma corresponds to Theorem 1 in \cite{numbl}. For the rest, let us denote the product of probabilities by
\begin{equation}
\Pi\left(  I\right)  =%
{\displaystyle\prod\limits_{i\in I}}
p_{i}%
{\displaystyle\prod\limits_{j\in {\cal{N}}\backslash I}}
\left(  1-p_{j}\right)  ,
\end{equation}
for simplifying the treatment below. The formula (\ref{sz2}) follows from
expressing the variance in terms of covariance
\begin{align}
\operatorname*{Var}\left(  q_{\ell}\right)   =\sum_{k,j=k_{ \ell}}^{  \ell}\sum_{\substack{I,J\subseteq {\cal{N}}\\\left\vert
I\right\vert =k,\left\vert J\right\vert =j}}\operatorname*{Cov}\left(
\Pi\left(  I\right)  ,\Pi\left(  J\right)  \right)  .
\end{align}
Now, we rewrite this expression into a more appropriate form. First, the notation is introduced, where $I_{T}^{k}\ $and $I_{F}^{k}$ for a partition of indices
${\cal{N}}=\{1,\ldots,\ell\}$, such that ${\cal{N}}=I_{T}^{k}\cup I_{F}^{k}$ where $I_{T}^{k}$
denotes indices of those members voting true with accuracy $p$. Similarly,
$I_{F}^{k}$ contains indices of false votes. Observe $I_{F}^{k}={\cal{N}}\backslash
I_{T}^{k}$. We have $\left\vert I_{T}^{k}\right\vert =k$ and $\left\vert
I_{F}^{k}\right\vert =\ell-k$. In the case of two partitions $I_{T}^{k}\cup I_{F}%
^{k}$ and $J_{T}^{j}\cup J_{F}^{j}$, let the number of the common
elements of $I_{T}^{k}$ and $J_{T}^{j}$ be $\left\vert I_{T}^{k}\cap
J_{T}^{j}\right\vert =n_{k.j}$; similarly, $\left\vert I_{F}^{k}\cap
J_{F}^{j}\right\vert =m_{k.j}$. 
According to this setup%
\begin{align}
\operatorname*{Var}\left(  q_{\ell}\right) =\sum_{k,j=k_{\ell}}^{ \ell}\sum_{I_{T}^{k},J_{T}^{j}}\operatorname*{Cov}%
\left(  \Pi\left(  I_{T}^{k}\right)  ,\Pi\left(  J_{T}^{k}\right)  \right).
\end{align}
Observe $I_{F}^{k}={\cal{N}}\backslash I_{T}^{k}$ when we apply the notation for the
product. Now, we consider the covariance
\begin{multline}
  \operatorname*{Cov}\left(  \Pi\left(  I_{T}^{k}\right)  ,\Pi\left(
J_{T}^{k}\right)  \right)  \\
 =\operatorname*{E}\Pi\left(  I_{T}^{k}\right)  \Pi\left(  J_{T}^{k}\right)
-\operatorname*{E}\Pi\left(  I_{T}^{k}\right)  \operatorname*{E}\Pi\left(
J_{T}^{k}\right)  \\
  =\operatorname*{E}\Pi\left(  I_{T}^{k}\right)  \Pi\left(  J_{T}^{k}\right)
-\mu^{k+j}\left(  1-\mu\right)  ^{2\ell-k-j}.
\end{multline}
The first term contains three types of products:
\begin{align}
\operatorname*{E}p^{2} &  =s_{T}=\sigma^{2}_{p}+\mu^{2}_{p},\\
\operatorname*{E}\left(  1-p\right)  ^{2} &  =s_{F}=\sigma^{2}_{p}+\left(
1-\mu_{p}\right)  ^{2},\\
\operatorname*{E}p\left(  1-p\right)   &  =s_{TF}=\mu_{p}\left(  1-\mu_{p}\right)
-\sigma^{2}_{p}.
\end{align}
The 
pool constitutes independent variables;
therefore,
\begin{align} 
 \operatorname*{Var}&\left(  q_{\ell}\right)
=\sum_{k,j=k_{\ell}}^{ \ell}%
\sum_{I^{k},J^{j}}\left(  \sigma^{2}_{p}+\mu^{2}_{p}\right)  ^{n_{k.j}}\left(
\sigma^{2}_{p}+\left(  1-\mu_{p}\right)  ^{2}\right)  ^{m_{k.j}} \notag
\\ 
& \times\left(  \mu_{p}\left(
1-\mu_{p}\right)  -\sigma^{2}_{p}\right)  ^{   \ell- n_{k.j}-m_{k.j}} -\left(  Eq_{\ell}\right)  ^{2}. \label{Vari_qn1}
\end{align} 
since the sum of the second term gives the $\left(  Eq_{ \ell}\right)  ^{2}$,
indeed
\begin{multline}
\sum_{k,j=k_{\ell           }}^{\ell}\binom{\ell}{k}\binom{\ell}{j}\mu^{k+j}_{p}\left(
1-\mu_{p}\right)  ^{2\ell-k-j} \\   =\left(  \sum_{k=k_{\ell}}^{\ell}\binom{\ell}{k}\mu_{p}
^{k}\left(  1-\mu_{p}\right)  ^{\ell-k}\right)  ^{2} 
 =\left(  Eq_{\ell}\right)  ^{2}.
\end{multline}
We simplify (\ref{Vari_qn1}), collecting similar terms and obtain (\ref{sz2}).
Before we prove the limit (\ref{Var_Limit}), let us observe
\begin{align}
s_{T}+s_{TF}  &  =\mu_p,\\
s_{F}+s_{TF}  &  =1-\mu_p,\\
s_{T}+s_{F}+2s_{TF}  &  =1.
\end{align}
i.e., the set $\left\{  s_{T},s_{TF},s_{F},s_{TF}\right\}  $ constitutes a
probability distribution for $s_{TF}>0$; in other words, $\mu^{2}_{p}%
+\sigma^{2}_{p}<\mu_{p}$. If it is so, we rewrite (\ref{sz2}) in the form of
a multinomial distribution. The coefficients in (\ref{sz2}) are actually
multinomial coefficients. 
The rest of the proof is based on the approximation of the binomial distribution by the normal distribution. It is not complicated but slightly lengthy; we  make it  available to the interested readers on request.
\end{proof}

\section{Proof for Lemma \ref{LemmaTime}}

\begin{proof}
\label{Proof_LemmaTime} We show only the first statement; the rest of the lemma is well known.  If $\lambda\in\left(  0,1\right)  $ is distributed as
\emph{Beta} $\left(  \alpha_p,\beta_p\right)  $, then $1-\lambda$ is distributed as \emph{beta}
$\left(  \beta_p,\alpha_p\right)  $.  The expected value of time is calculated in two steps; first, we take the conditional expectation, namely,  
\begin{equation}
\begin{split}
E\tau =EE\left(  \left.  \tau\right\vert \lambda\right) 
=\int\limits_{0}^{1}%
\int\limits_{0}^{\infty}t\lambda\exp\left(  -\lambda t\right)dtb\left(
\lambda;\beta_p,\alpha_p\right)d\lambda\\
=\frac{\Gamma\left(  \beta_p-1\right)  }{\Gamma\left(  \alpha_p+\beta_p-1\right)
}\frac{\Gamma\left(  \alpha_p+\beta_p\right)  }{\Gamma\left(  \beta_p\right)  }%
  =1+\frac{\alpha_p}{\beta_p-1},
 \end{split}
\end{equation}
where we assumed that $1<\beta_p<\alpha_p$. 
Suppose  $2<\beta_p<\alpha_p$ to calculate the variance in a similar manner
\begin{equation}
\begin{split}
Var\left(  \tau\right)   &  =EE\left(  \left.  \left(  \tau-E\left(  \left.
\tau\right\vert \lambda\right)  \right)  ^{2}\right\vert \lambda\right)  \\
& =\int_{0}^{1}\frac{1}{\lambda^{2}}b\left(
\lambda;\beta_p,\alpha_p\right)  d\lambda
=1+\frac{\alpha_p}{\beta_p-2}.
\end{split}
\end{equation}

\end{proof}

\end{document}